\documentclass{article}

\usepackage{wrapfig}

\usepackage{microtype}
\usepackage{graphicx}
\usepackage{subfigure}
\usepackage{booktabs} 
\usepackage[nospread]{flushend}
\usepackage{balance}
\usepackage{hyperref}

\usepackage[accepted]{icml2025}

\usepackage{amsmath}
\usepackage{amssymb}
\usepackage{mathtools}
\usepackage{amsthm}

\usepackage[capitalize,noabbrev]{cleveref}
\theoremstyle{plain}
\newtheorem{theorem}{Theorem}[section]

\newtheorem{lemma}[theorem]{Lemma}

\theoremstyle{definition}
\newtheorem{definition}[theorem]{Definition}

\theoremstyle{remark}
\newtheorem{remark}[theorem]{Remark}
\usepackage[textsize=tiny]{todonotes}

\icmltitlerunning{Learning Invariant Visual Representations for Planning with JEPAs}

\begin{document}

\twocolumn[
\icmltitle{Learning Invariant Visual Representations for Planning with\\ Joint-Embedding Predictive World Models}

\icmlsetsymbol{equal}{*}

\begin{icmlauthorlist}
\icmlauthor{Leonardo F. Toso}{equal,yyy}
\icmlauthor{Davit Shadunts}{equal,yyy}
\icmlauthor{Yunyang Lu}{equal,yyy}
\icmlauthor{Nihal Sharma}{comp}
\icmlauthor{Donglin Zhan}{yyy}\\
\icmlauthor{Nam H. Nguyen}{comp}
\icmlauthor{James Anderson}{yyy}
\end{icmlauthorlist}

\icmlaffiliation{yyy}{Department of Electrical Engineering, Columbia University, New York, USA.}
\icmlaffiliation{comp}{Capital One, USA}

\icmlcorrespondingauthor{Leonardo F. Toso}{leonardo.toso@columbia.edu}

\icmlkeywords{Machine Learning, ICML}

\vskip 0.3in
]

\printAffiliationsAndNotice{\icmlEqualContribution} 

\begin{abstract}
World models learned from high-dimensional visual observations allow agents to make decisions and plan directly in latent space, avoiding pixel-level reconstruction. However, recent latent predictive architectures (JEPAs), including the DINO world model (DINO-WM), display a degradation in test time robustness due to their sensitivity to ``slow features". These include visual variations such as background changes and distractors that are irrelevant to the task being solved. We address this limitation by augmenting the predictive objective with a bisimulation encoder that enforces control-relevant state equivalence, mapping states with similar transition dynamics to nearby latent states while limiting contributions from slow features. We evaluate our model on a simple navigation task under different test-time background changes and visual distractors. Across all benchmarks, our model consistently improves robustness to slow features while operating in a reduced latent space, up to $10\times$ smaller than that of DINO-WM. Moreover, our model is agnostic to the choice of pretrained visual encoder and maintains robustness when paired with DINOv2, SimDINOv2, and iBOT features.
\end{abstract}

\vspace{-0.8cm}
\section{Introduction}
\vspace{-0.1cm}

Constructing spatial and temporal models of the world to predict future outcomes and guide decisions is a crucial capability of the human brain \cite{shadmehr2012biological}. In machine learning, this is mirrored through world models that are learned from high-dimensional visual observations, enabling efficient prediction, planning, and control by reasoning over learned latent dynamics rather than raw sensory inputs. Planning with such a ``compressed" representation of the environment dynamics has become a cornerstone of contemporary robotics \cite{ha2018world, hafner2019dream, zhoudino}, model-based reinforcement learning (RL), and control \citep{hafner2019learning, hafnermastering, ebert2018visual, thuruthel2018model, matsuo2022deep, khorrambakht2025worldplanner} as it provides data efficiency and generalization across tasks while avoiding online interaction and visual reconstruction \cite{sobal2025learning}.

An impactful line of recent work centers on Joint Embedding Predictive Architectures (JEPAs), which eschew reconstruction of raw observations in favor of learning representations by predicting future latent embeddings \citep{grill2020bootstrap, caron2021emerging, lecun2022path, assran2023self, sobal2025learning, balestriero2025lejepa}. Building on JEPAs, the DINO world model (DINO-WM) \citep{zhoudino} learns a latent predictive dynamic by encoding visual observations through a pretrained DINOv2 feature extractor \citep{oquab2023dinov2}. This yields zero-shot planning, with a pretrained, reward-free world model on entirely new tasks.

Despite their empirical success, JEPAs suffer from a fundamental limitation: they pay too much attention to \emph{slowly changing} and {\em task-irrelevant features} \citep{lecun2022path, sobal2022joint}. These slow features refer to aspects of the raw observations that change gradually over time, but are irrelevant for planning, such as background textures, lighting, camera viewpoints, and distractors. In the presence of slow features, the predictive objective in JEPA can be minimized by encoding \emph{only} such temporally consistent information, yielding degenerate solutions. As a result, world models trained under fixed visual conditions often fail to generalize when the background changes at test time (despite identical underlying object dynamics). This limitation has been discussed in \cite{sobal2022joint} for JEPAs and empirically observed to be true for DINO-WM (Figure \ref{fig:archictecture_maze_motivation}-(Right)).

Importantly, this limitation \emph{cannot} be fully resolved by larger visual foundation models alone (such as DINOv2 \cite{oquab2023dinov2}, SimDINOv2 \cite{wu2025simplifying}, iBOT \cite{zhou2021ibot}, masked autoencoders \cite{he2022masked}, among others) as they also extract control-irrelevant information. Even with invariant perception modules, the learned latent dynamics may still entangle control-relevant states with nuisance visual features, leading to brittle planning under visual distribution shifts at test time. Ideally, a world model should encode only those aspects of the state that are relevant for predicting future states, while being invariant to task-irrelevant visual factors.

\vspace{-0.15cm}
\begin{center}
\fbox{%
\colorbox{gray!12}{%
\parbox{0.92\linewidth}{%
\textbf{Question.} Can we learn latent world models that retain the joint-embedding predictive strengths while also being robust to the slow features?
}}}
\end{center}
\vspace{-0.15cm}

We address this question using bisimulation metrics for learning invariant representations on top of pretrained visual feature extractors (i.e., DINOv2, SimDINOv2, iBOT). These metrics define state equivalences based on immediate rewards and transition dynamics, and have been studied as a principled foundation for state abstraction in Markov Decision Processes (MDPs) \citep{ferns2004metrics, castro2009equivalence}, transfer learning \cite{castro2010using}, and invariant representation learning \citep{zhanglearning, kemertas2021towards, tkachev2011infinite, shimizubisimulation}. By construction, encoders trained with a bisimulation regularizer discard nuisance features that do not affect control, making it a natural candidate for addressing the slow-feature limitation of JEPAs.

\vspace{-0.2cm}
\begin{center}
\fbox{%
\colorbox{blue!4}{%
\parbox{0.92\linewidth}{%
\textbf{Key Point.} DINO-WM \cite{zhoudino} displays a substantial degradation in performance on relatively simple navigation tasks (e.g., PointMaze \cite{fu2020d4rl}) when evaluated under background changes and visual distractors at test time (see Figure \ref{fig:archictecture_maze_motivation}-(Right)). To address this, we propose a \emph{bisimulation encoder} for visual world models that augments latent dynamics with an on-policy bisimulation objective, enforcing invariance in the learned latent space. The bisimulation encoder is trained jointly with the transition model, enforcing states with similar transition behavior to be mapped to nearby latent embeddings while discarding task-irrelevant visual information (see Figure~\ref{fig:encoding_motivation}). Importantly, we also show that explicit reward prediction is not fundamental to learning invariant world-model representations, thereby simplifying training.
}}}
\end{center}
\vspace{-0.1cm}

\begin{figure}
  \centering
  \setlength{\tabcolsep}{2pt}
  \renewcommand{\arraystretch}{0}
\includegraphics[width=0.9\linewidth]{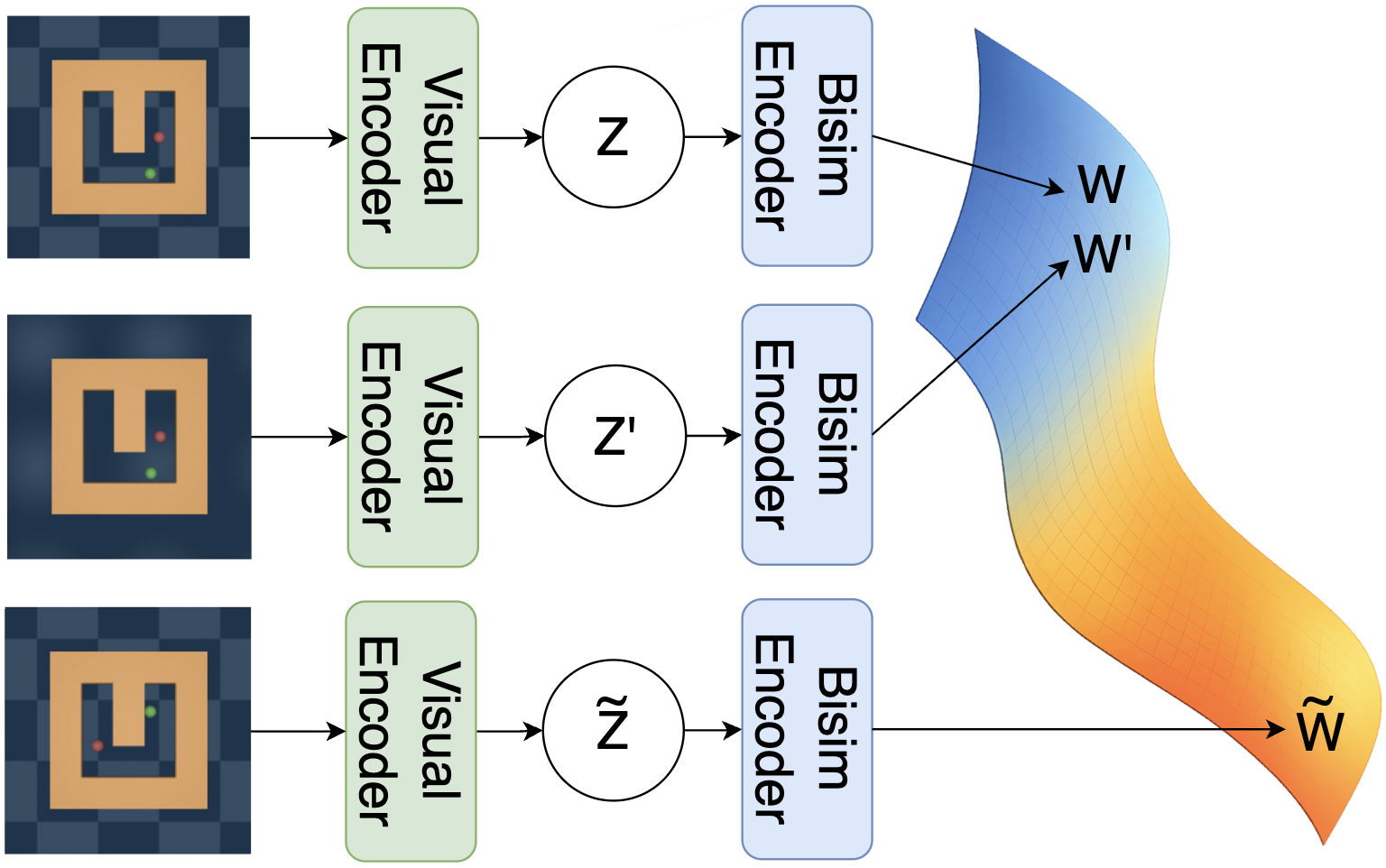} 
\vspace{-0.35cm}
\caption{Visually distinct observations that differ only in background (checkerboard and gradient) are first mapped to latent embeddings $Z$ and $Z'$ by a pretrained encoder (initial state in green, goal in red). A bisimulation encoder then projects these into lower-dimensional representations $W$ and $W'$, which are equivalent under on-policy transition dynamics. In contrast, an observation with different underlying dynamics is mapped to $\tilde W$ and separated in the bisimulation space.
}
\label{fig:encoding_motivation}
\vspace{-0.75cm}
\end{figure}

\noindent \textbf{Contributions.} Our main contributions are as follows: 
\vspace{-0.1cm}

\noindent $\bullet$  We propose a bisimulation encoder for visual world models that enforces control-relevant state equivalence while discarding task-irrelevant visual features from pretrained visual feature extractors (e.g., DINOv2, SimDINOv2, iBOT).\\
\noindent $\bullet$ We integrate the bisimulation objective into a latent predictive world model architecture (see Figure \ref{fig:archictecture_maze_motivation}-(Left)), showing that invariant representations can be learned jointly with the transition model \emph{without} relying on reward prediction.\\
\noindent $\bullet$ We show that our bisimulation encoder yields robust planning performance under several background changes and a moving distractor for a navigation task, with latent embeddings dimensions that are $10\times$ smaller than DINO-WM. \\
\noindent $\bullet$ We validate our model with three different pretrained visual encoder backbones: DINOv2, SimDINOv2, and iBOT, namely, and show consistent robustness over slow features when compared to DINO-WM \cite{zhoudino} both with and without domain randomization \cite{tobin2017domain}.

\begin{figure*}
  \centering
  \setlength{\tabcolsep}{2pt}
  \renewcommand{\arraystretch}{0}

  \begin{tabular}{ccc}
    \includegraphics[width=0.48\linewidth]{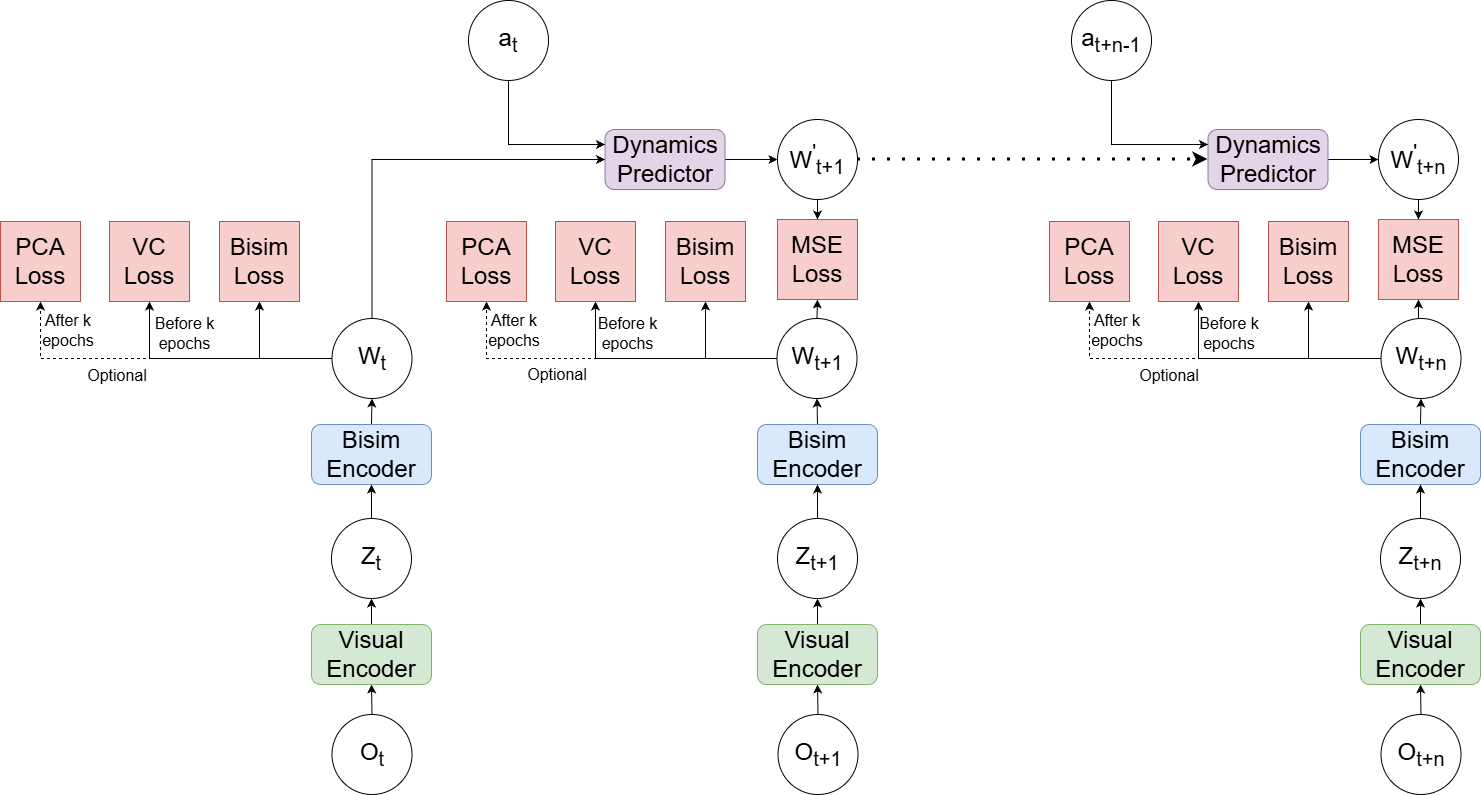} &
    \hspace{0.7cm}\includegraphics[width=0.43\linewidth]{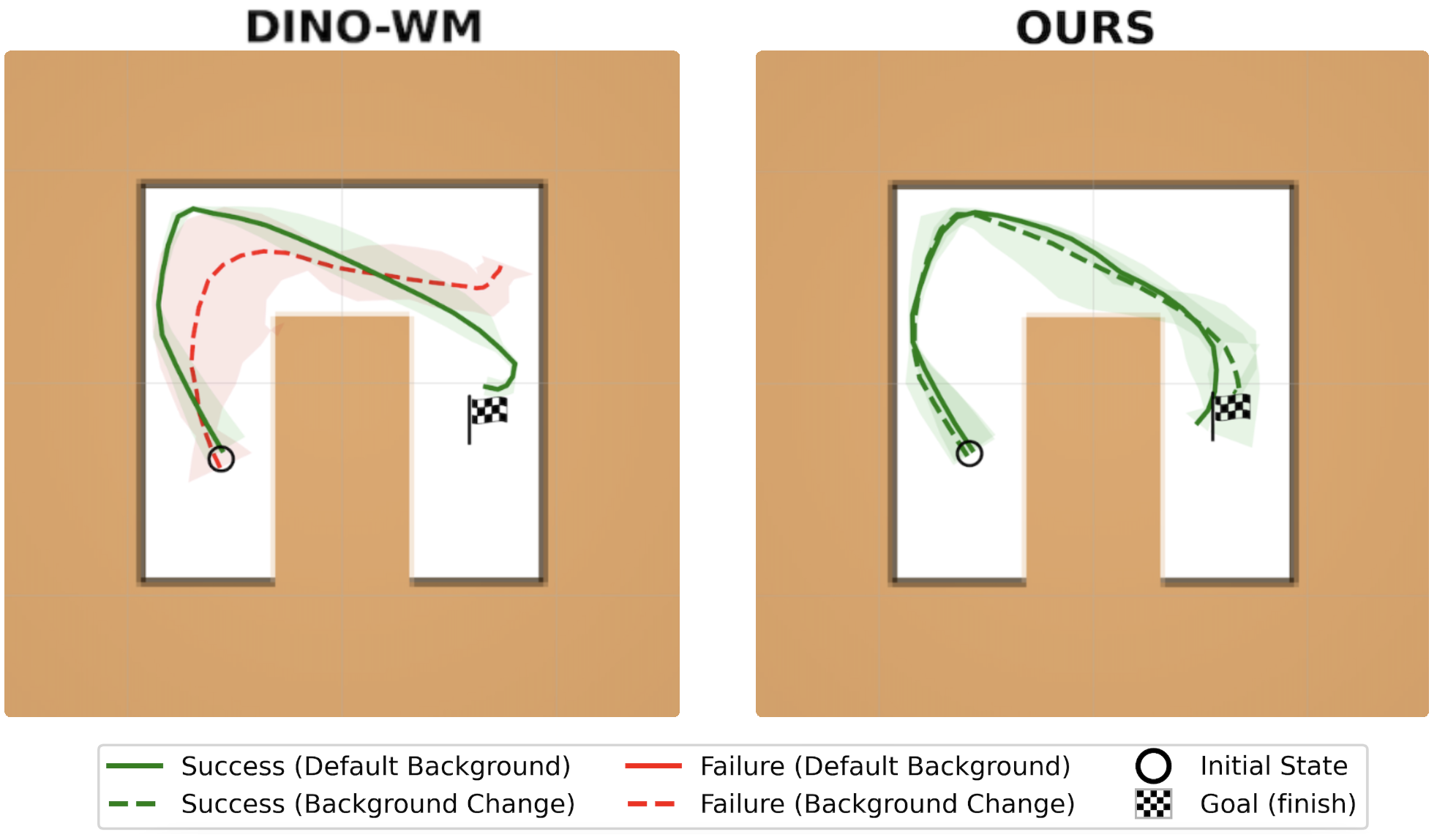} &
  \end{tabular}\vspace{-0.25cm}
\caption{\textbf{Left.} Model architecture and training objectives. Visual observations are encoded using a frozen pretrained visual encoder, followed by a bisimulation encoder that maps features into a low-dimensional, control-relevant latent space. The bisimulation loss is trained jointly with the latent transition model, enforcing invariance to task-irrelevant visual features. \textbf{Right.} Rollouts for PointMaze navigation under a background change at test time. For clarity, the background change is not depicted in the figure. See Section~\ref{sec:backgrounds} for details on background changes. While DINO-WM fails to reach the goal due to background change our model succeeds.}\vspace{-0.55cm}
  \label{fig:archictecture_maze_motivation}
\end{figure*}

\vspace{-0.3cm}
\section{Related Work}
\vspace{-0.1cm}

\noindent $\bullet$ \textbf{Joint embedding predictive architectures (JEPAs).}
Joint embedding predictive architectures learn feature representations by predicting future latent states rather than reconstructing observations \citep{grill2020bootstrap, caron2021emerging, lecun2022path, assran2023self, sobal2025learning, balestriero2025lejepa}. Most relevant to our work is DINO-WM \cite{zhoudino} that learns latent dynamics directly in the space of pretrained DINOv2 visual features \cite{oquab2023dinov2}, avoiding pixel-level reconstruction. However, latent prediction in DINO-WM remains sensitive to slow, control-irrelevant features. We address this limitation by augmenting the predictive objective with a bisimulation encoder to enforce control-relevant invariances.

\vspace{-0.1cm}

\noindent $\bullet$ \textbf{Bisimulation metrics and invariant representations.} Bisimulation provides a principled way of behavioral equivalence in MDPs by quantifying approximate equivalence through reward discrepancies and mismatches in future state distributions for different states under the same policy \citep{ferns2011bisimulation, abate2013approximation}. Most relevant to our work is BS-MPC in \citep{shimizubisimulation}, which incorporates a bisimulation metric loss for model predictive control \cite{kouvaritakis2016model, lenz2015deepmpc, williams2017information}. In contrast to \citep{shimizubisimulation}, we leverage bisimulation metrics to ``filter out" control-irrelevant information from a pretrained feature space (e.g., that of DINOv2, SimDINOv2, iBOT). Moreover, we jointly train our prediction model without relying on reward prediction. 

\vspace{-0.1cm}
\noindent $\bullet$ \textbf{Model-based control with latent world models.}
Model-based RL with world models enable planning by rolling out learned dynamics in a compact latent space, which has been effective for high-dimensional tasks \citep{ha2018world, hafner2019learning, hafner2019dream}. In robotics, latent dynamics models are commonly paired with sampling-based optimizers for action selection, including the cross-entropy method (CEM), which is widely used for planning with learned world models \citep{rubinstein2004cross, zhoudino}. Our control stack follows this line, we plan with CEM with the learned latent dynamics, and focus on the representation learning side, i.e., how to learn a latent space that is robust to slow features so that planning remains reliable under visual distribution shifts in test time.
\vspace{-0.3cm}

\section{Preliminaries}

\vspace{-0.1cm}
We now introduce the notation and preliminaries on JEPAs, bisimulation metrics, and invariant representation learning. Throughout, we consider a vision-based control task given by a partially observable MDP (POMDP) that evolves in discrete time $t \in \left\{0,1, 2,\dots\right\}$. An observation is an RGB image $o_t \in \mathcal{O}$, an action is denoted by $a_t \in \mathcal{A}$, and a trajectory is a sequence $\left(o_t,a_t,o_{t+1} , a_{t+1}, \dots \right)$. We denote the data distribution induced by a behavior policy $\pi$ by $\mathcal{D}_\pi$, meaning that $\left(o_t,a_t,o_{t+1}\right) \sim \mathcal{D}_\pi$ when rollouts are generated by playing the environment with $\pi$. We use $\mathbb{E}_{\mathcal{D}_\pi}\left[\cdot\right]$ to denote the expectation with respect to this data distribution.

\vspace{-0.2cm}
\subsection{Joint-Embedding Predictive Architectures}\label{subsec:jepa}
\vspace{-0.1cm}

JEPAs learn a visual encoder $f_\theta:\mathcal{O}\to\mathbb{R}^{d_z}$ that maps an observation $o_t$ to a latent embedding, i.e., $ z_t = f_\theta\left(o_t\right)\in\mathbb{R}^{d_z},$ where $d_z$ is the embedding dimension and $\theta$ is the vector of encoder parameters. Jointly, a predictor (i.e., latent dynamics model) $\tilde{T}_\phi:\mathbb{R}^{d_z}\times\mathcal{A}\to\mathbb{R}^{d_z}$ is trained to map the current latent state and action to a predicted next latent state $ \hat z_{t+1} = \tilde{T}_\phi\left(z_t,a_t\right),$ with parameters $\phi$.

Since representations are defined up to arbitrary invertible transformations, JEPA training includes a target embedding.
The target is denoted by  $z_{t+1} = f_{\theta}\left(o_{t+1}\right)$.
The JEPA objective minimizes the prediction loss between $\hat z_{t+1}$ and $z_{t+1}$,\vspace{-0.1cm}
\begin{align}\label{eq:jepa_pred_loss}
\hspace{-0.5cm}\mathcal{L}_{\mathrm{pred}}\left(\theta,\phi\right) =
\mathbb{E}_{\mathcal{D}_\pi}\left[
\ell\left(\tilde{T}_\phi\left(f_\theta\left(o_t\right),a_t\right), f_{\theta}\left(o_{t+1}\right)\right)
\right],
\vspace{-0.2cm}
\end{align}
where $\ell:\mathbb{R}^{d_z}\times\mathbb{R}^{d_z}\to\mathbb{R}_+$ is the mean squared error.

We emphasize that DINO-WM \cite{zhoudino}, follow this principle, with the key distinction that the visual encoder parameters $\theta$ are fixed and not learned during training. Therefore, observations are first mapped to latent embeddings using a pretrained visual foundation model (e.g., DINOv2 \cite{oquab2023dinov2}), and a predictive dynamics model is trained on top of that latent space. Here, we also treat the pretrained visual encoder as fixed and train an extra bisimulation encoder jointly with the dynamics on top of it.
\vspace{0.1cm}

\begin{remark}[Slow Features]
We emphasize that JEPAs inadvertently focus on the slow features. In particular, when observations contain static backgrounds or persistent distractors that vary slowly relative to task-relevant dynamics, the prediction loss \eqref{eq:jepa_pred_loss} can be minimized by encoding such features into the latent embedding $z_t = f_\theta(o_t)$, since they induce small temporal differences $\|z_{t+1}-z_t\|$. As a result, slowly varying nuisance information may dominate the predictive objective, leading to latent embeddings that fail to capture the underlying task structure \citep{sobal2022joint} and thus are not generalizable to background changes at test time (see Figure \ref{fig:archictecture_maze_motivation}-(Right)). This limitation motivates the use of bisimulation. By defining latent distances in terms of on-policy transition behavior, bisimulation induces representations that collapse nuisance visual features while preserving the task-relevant information for planning.
\end{remark}

\vspace{-0.36cm}
\subsection{Bisimulation Metric}\label{subsec:bisimulation}
\vspace{-0.15cm}

Let us now (informally) revisit the definition of on-policy bisimulation metrics from \cite{castro2020scalable}. Intuitively, two latent states, in the pretrained visual encoder space, $z$ and $z^\prime$ are ``bisimilar" if they induce similar immediate rewards (for optimality equivalence) and similar future state evolution (for dynamics equivalence) under the same actions.

Here, the environment is partially observed and represented through latent embeddings obtained from a visual encoder (e.g., DINOv2, SimDINOv2, iBOT). The goal is to learn a lower-dimensional representation $w = h_\eta(z) \in \mathbb{R}^{d_w}$, with $d_w \ll d_z$, that preserves control-relevant structure while discarding nuisance visual variation. We formalize state equivalence with the \emph{on-policy} bisimulation metric. 

\begin{definition}[On-Policy Bisimulation Metric \citep{castro2020scalable}]\label{def:onpolicy_bisim}
Given a policy $\pi$, a reward function $r(z,a)$, and a transition kernel $P_\pi(\cdot\mid z)$ induced by $\pi$, the on-policy bisimulation metric is 
\begin{align*}
d_\pi(z,z') = \underbrace{\left| r_\pi(z) - r_\pi(z') \right|}_{\mathrm{reward~similarity}}
+ \gamma  \underbrace{W_1 \left( P_\pi(\cdot \mid z), 
P_\pi(\cdot \mid z') \right)}_{\mathrm{dynamics~ similarity}},
\end{align*}
where $r_\pi(z) = \mathbb{E}_{a \sim \pi(\cdot \mid z)}[r(z,a)]$, $\gamma \in (0,1)$ is a discount factor, and ${W}_1$ denotes the $1$-Wasserstein distance.
\end{definition}

\vspace{-0.3cm}
\begin{center}
\fbox{%
\colorbox{blue!4}{%
\parbox{0.92\linewidth}{%
\textbf{On Dropping the Reward Term.}  Note that $d_\pi(z,z')$ has two terms: a reward similarity term $\lvert r_\pi(z) - r_\pi(z') \rvert$ and a dynamics similarity term measured via the $1$-Wasserstein distance \cite{givens1984class}. The reward term ensures that if two state trajectories are similar, they are not collapsed if they yield different rewards. In particular, such reward discrepancy is crucial when bisimulation is used for policy transfer or value-function approximation \citep{ferns2004metrics, castro2009equivalence, castro2010using}. However, for the purpose of learning an invariant representation for JEPAs, we claim that this term is not fundamental. This is because the transition term alone induces a meaningful notion of equivalence: states that exhibit similar evolution under the same action sequences are mapped to nearby representations in the bisimulation latent space (see the illustration in Figure \ref{fig:encoding_motivation})
}}}
\end{center}
\vspace{-0.1cm}

Moreover, dropping the reward term is well aligned with JEPAs, which are inherently reward-free and focus on predictive consistency in latent space. Incorporating rewards would require task-specific supervision and would break the modularity between representation learning and zero-shot downstream planning. Accordingly, we omit the reward term in our implementation and rely only on approximating the Wasserstein transition similarity to enforce invariance. This design choice is further supported numerically in Section \ref{sec:experiments}, where a reward predictor is included.

\vspace{-0.3cm}
\subsection{Learning Invariant Representations}
\vspace{-0.1cm}
Now, we discuss the invariant representation learning problem for our joint-embedding predictive world model. Recall that $z_t=f_\theta \left(o_t\right)\in\mathbb{R}^{d_z}$ typically contains both control-relevant information and nuisance features. Our goal is to learn an  encoder $w_t = h_\eta\left(z_t\right)\in\mathbb{R}^{d_w},
$ that limits the contribution of the slow features in the transition dynamics.

To do so, we jointly learn the dynamics in the bisimulation space through
$T_\phi:\mathbb{R}^{d_w}\times\mathcal{A}\to\mathbb{R}^{d_w},$ such that $ \hat w_{t+1}=T_\phi \left(w_t,a_t\right)$ predicts the next latent state. With the notation set up in Section \ref{subsec:jepa}, we have the predictive dynamics augmented with the bisimulation encoder, i.e., $T_\phi(h_\eta(f_\theta(o_t)),a_t)$, where again $\theta$ is fixed.

To connect invariance to bisimulation, we consider an abstract state space $\mathcal{W}\subseteq\mathbb{R}^{d_w}$ equipped with the metric $d_\pi$. We aim to learn $h_\eta$ and $T_\phi$ jointly so that the distances in $\mathcal{W}$ approximate an on-policy bisimulation metric, meaning that states that are equivalent for control under rollouts are mapped nearby in such latent space. We do this by optimizing a bisimulation objective that aligns pairwise latent distances with a target quantity that compares on-policy dynamics. Concretely, for a pair of observation, action, and next observation $\left(o_t,a_t,o_{t+1}\right)$ and $\left(o'_t,a'_t,o'_{t+1}\right)$ drawn from $\mathcal{D}_\pi$, we have the latent states
$$
\underbrace{z_t=f_\theta \left(o_t\right),\; z'_t=f_\theta \left(o'_t\right)}_{\text{Pretrained latents}},\;
\underbrace{w_t=h_\eta\left(z_t\right),\; w'_t=h_\eta\left(z'_t\right)}_{\text{Bisimulation latents}}. \vspace{-0.2cm}
$$

The transition dynamics $T_\phi$ is trained using
\begin{align*}
\hspace{-0.2cm}\mathcal{L}_{\mathrm{dyn}}\left(\eta,\phi\right)
\triangleq \mathbb{E}_{\mathcal{D}_\pi}\hspace{-0.1cm}\left[
\left\|h_\eta\left(f_\theta\left(o_{t+1}\right)\right)\hspace{-0.05cm}-\hspace{-0.05cm}T_\phi\left(h_\eta\left(f_\theta \left(o_t\right)\right),a_t\right)\right\|^2
\right]\hspace{-0.05cm},
\end{align*}
and we note that, in practice, as we do not have $\mathcal{D}_\pi$, we compute the empirical mean instead (i.e., MSE).

On the other hand, for invariance, our loss encourages the distance $\left\|w_t-w'_t\right\|_2$ to match a control-relevant similarity signal measured by the bisimulation metric. We define the on-policy one-step bisimulation target between states as $\Delta_{\mathrm{bisim}}(\eta,\phi) \triangleq \gamma \left\|T_\phi\left(w_t,a_t\right)- T_\phi\left(w'_t,a'_t\right)\right\|_2,$
where it compares the predicted next latent states under the learned dynamics. We fit the bisimulation encoder by minimizing
\vspace{-0.1cm}
$$
\mathcal{L}_{\mathrm{bisim}}\left(\eta,\phi \right) \triangleq \mathbb{E}_{\mathcal{D}_\pi}\left[ \left(\left\|w_t-w'_t\right\|_2-\Delta_{\mathrm{bisim}}(\eta,\phi)\right)^2 \right].
$$
\vspace{-0.1cm}
Finally, we define the overall training objective as
\begin{align}
\hspace{-0.35cm}\mathcal{L}_{\mathrm{jepa-bisim}}\left(\eta,\phi\right) \triangleq 
\mathcal{L}_{\mathrm{dyn}}\left(\eta,\phi\right)
+\lambda_{\mathrm{bisim}}\mathcal{L}_{\mathrm{bisim}}\left(\eta,\phi\right),
\label{eq:overall_prelim_obj}
\end{align}
with $\lambda_{\mathrm{bisim}}>0$ being the regularization weight.

Minimizing the loss formalizes our goal of learning a compact latent space that preserves on-policy predictive structure for planning while suppressing task-irrelevant visual features. We next describe the architecture, loss instantiation, and planning in the learned bisimulation space.

\vspace{-0.3cm}

\section{Our World Model}
\vspace{-0.1cm}

We now describe our architecture for learning invariant world models that builds on pretrained visual features (DINOv2, SimDINOv2, and iBOT) and augments them with a bisimulation encoder to suppress task-irrelevant information. The model learns latent transition dynamics and performs planning via MPC with CEM. An overview of the full pipeline and training objectives is shown in Figure \ref{fig:archictecture_maze_motivation}-(Left).

\vspace{-0.3cm}
\subsection{Pretrained Visual Encoders}
\vspace{-0.1cm}

In our experiments, we consider three self-supervised visual encoders: DINOv2 \citep{oquab2023dinov2} (with ViT-S/14), SimDINOv2 \citep{wu2025simplifying} (with ViT-B/16), and iBOT \citep{zhou2021ibot} (with ViT-S/16). These are trained on large-scale image datasets using joint embedding or masked prediction objectives and have been shown to produce semantically rich visual features \cite{jiang2023clip}.

These visual encoders are \emph{patch-based}, namely,
$f_{\theta} : \mathcal{O} \rightarrow \mathbb{R}^{N_p \times d_z}$, where $N_p$ is the number of patches. Most importantly, they are trained without access to the transition dynamics, actions and/or rewards, and thus capture \emph{generic} visual features rather than control-relevant structure. In particular, they are not tailored for planning and are not fine-tuned to allow for visual shifts in test time. Our goal is to adapt their feature embeddings by learning an additional bisimulation encoder on top of $\{z_t\}_t$ (see Figure \ref{fig:archictecture_maze_motivation}-(Left)).

When using DINOv2 features, we refer to our model as DINO-Bisim. Similarly, we use SimDINO-Bisim and iBOT-Bisim when paired with SimDINOv2 and iBOT visual feature extractors, respectively.

\vspace{-0.3cm}
\subsection{Bisimulation Encoder}
\vspace{-0.1cm}

Given pretrained visual embeddings $z_t$, we learn a bisimulation encoder
$h_\eta : \mathbb{R}^{N_p \times d_z} \rightarrow \mathbb{R}^{N_p \times d_w},  w_t = h_\eta(z_t),$ as discussed in Section \ref{subsec:bisimulation}. We note that in contrast to \cite{zhanglearning, shimizubisimulation}, our bisimulation encoder is designed in patches (we refer the reader to Appendix \ref{appendix:patch_design} for details). Such encoder consists of a per-patch residual MLP projection head (Linear-ResMLP-Linear) that maps each DINOv2 patch embedding (or raw image patch) into a $d_w = 32$ dimensional latent, followed by learned spatial positional embeddings and LayerNorm.

This encoder is trained to ensure that distances in the $w$-space reflect on-policy bisimulation distances, as defined in \eqref{eq:overall_prelim_obj}. To this end, we \emph{flatten} the spatial patch features prior to computing the previously defined losses and compare states across different trajectories that are stored in a \emph{replay buffer}, following \cite{zhanglearning}. However, minimizing the bisimulation loss admits degenerate solutions as mapping all states to a single representation trivially minimizes pairwise distances, and we refer to this as the representation \emph{collapse}. To circumvent such collapse, prior work \citep{bardes2021vicreg, zhu2023variance} has adopted variance-invariance-covariance regularization (VICReg) to guarantee that each latent dimension keeps sufficient variance over batches of states. We refer the reader to \citep[Section 4.1]{bardes2021vicreg} for an excellent overview of the approach. Here, invariance is given by the bisimulation loss.

In our setting, however, the standard VICReg \cite{bardes2021vicreg} \emph{may be} insufficient. This is because, for the pretrained visual encoders considered in this work (i.e., DINOv2, SimDINOv2, and iBOT) a large fraction of the variance is often concentrated in a small number of principal components that correlate with the slow features rather than task-relevant dynamics. Therefore, by imposing a uniform variance across all coordinates may preserve nuisance visual features, preventing the bisimulation encoder from collapsing the task-irrelevant information (see Appendix \ref{appendix:PCA-loss}).

Leveraging this insight, we propose a PCA-based VCReg loss, which adapts VICReg to pretrained visual embeddings by guaranteeing a minimum variance in directions that are not dominated by a large fraction of slow features.

\vspace{-0.3cm}
\subsection{PCA-based Variance Covariance Regularization}\label{subs:PCA-loss}
\vspace{-0.1cm}

\begin{figure}[t!]
    \centering
    \includegraphics[width=0.95\linewidth]{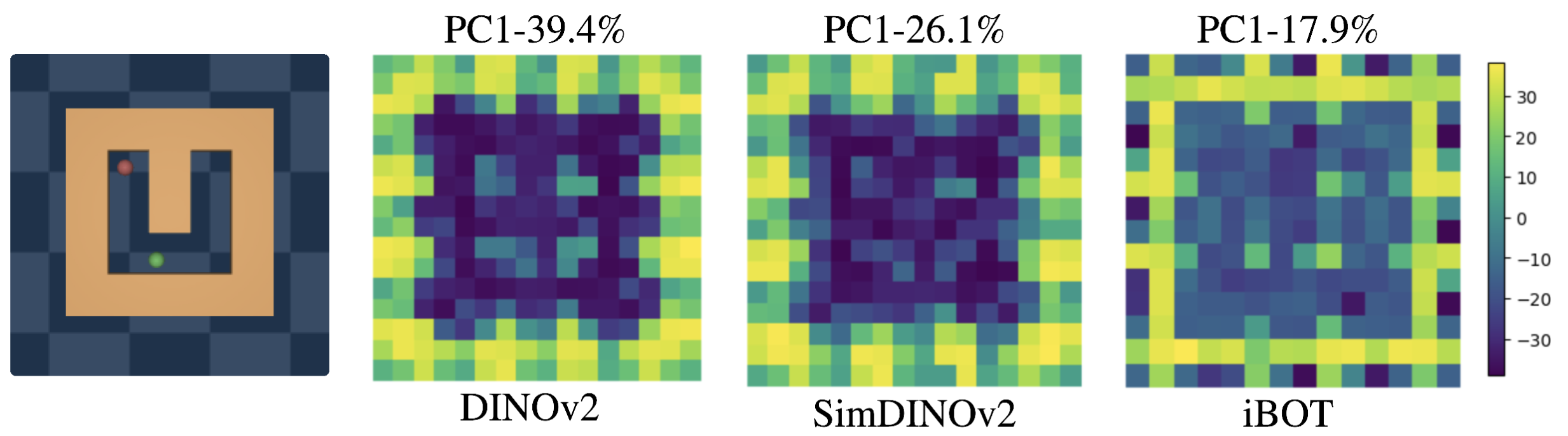}
    \vspace{-0.4cm}
    \caption{First principal component (PC1) of latent embeddings produced by different visual feature encoders for a PointMaze observation. In all cases, PC1 captures a large percentage of the total variance and predominantly encodes background and layout information rather than control-relevant features, motivating our PCA-based VICReg in the bisimulation encoder.}
    \label{fig:pca-pointmaze}
    \vspace{-0.5cm}
\end{figure}

As illustrated in Figure \ref{fig:pca-pointmaze}, the pretrained visual embeddings from DINOv2, SimDINOv2, and iBOT, allocate a large amount of variance to the first principal component (PC1), which are correlated with task-irrelevant features coming from backgrounds and distractors. If we use the standard VICReg loss with uniform variance across the coordinates of the bisimulation latent space, the optimizer may still preserve such irrelevant directions simply to satisfy the variance threshold, conflicting with the goal of learning a representation invariant to those slow features.

Therefore, we do VICReg in the PCA basis of the current batch. Let 
$
W_c \triangleq \begin{bmatrix}
(w_1-\bar w)^\top &
\ldots &
(w_B-\bar w)^\top
\end{bmatrix},
$
be the matrix with centered embeddings in the batch of size $B$, where $\bar w$ is the average. We define the empirical covariance $\Sigma \triangleq \frac{W_c^\top W_c}{B-1}$, and further compute its eigendecomposition as $\Sigma = U \Lambda U^\top$, where $\Lambda = \mathrm{diag}(\lambda_1,\ldots,\lambda_{d_w})$. We extract the PCA coordinates $\tilde w_b = U^\top (w_b-\bar w)$, and let $\tilde W_c$ be the batch matrix with rows $\tilde w_b^\top$.

To construct the PCA-based VICReg loss, we only enforce the variance floor on the tail PCs, namely indices $i \in \{i_0,\ldots,d_w\}$, where $i_0$ is a cutoff index (e.g., $i_0 = 2$). Let $\kappa_i \triangleq \sigma_{\min} - \sqrt{\lambda_i+\epsilon}$ and write the variance term
\vspace{-0.3cm}
$$
\mathcal{L}_{\mathrm{pca\text{-}var}}(\eta) \triangleq \frac{1}{d_w-i_0+1}
\sum_{i=i_0}^{d_w} \left[\max\left(0, \kappa_i \right)\right]^2.\vspace{-0.2cm}
$$
where $\epsilon>0$ is a small constant for numerical stability and $\sigma_{\min}$ denotes a minimum standard deviation. On the other hand, we regularize cross-covariances as \vspace{-0.3cm}
$$
\mathcal{L}_{\mathrm{pca\text{-}cov}}(\eta) \triangleq \frac{1}{d_w(d_w-1)}
\sum_{p\neq q} \left(\tilde \Sigma_{pq}\right)^2,
\; \tilde \Sigma \triangleq \frac{\tilde W_c^\top \tilde W_c}{B-1}, \vspace{-0.3cm}
$$
where $\tilde \Sigma_{pq}$ denotes the $(p,q)$ entry of the covariance in PCs coordinates, namely, the covariance matrix $\tilde{\Sigma}$.

Therefore, we combine the two PCA-based terms
\begin{align*}
\mathcal{L}_{\mathrm{pca\text{-}vc}}(\eta) \triangleq
\alpha_{\mathrm{v}}\,\mathcal{L}_{\mathrm{pca\text{-}var}}(\eta)+ \alpha_{\mathrm{c}}\,\mathcal{L}_{\mathrm{pca\text{-}cov}}(\eta),
\end{align*}
with weights $\alpha_{\mathrm{v}},\alpha_{\mathrm{c}}>0$, to the predictive loss \eqref{eq:overall_prelim_obj}.

We refer the reader to Appendix \ref{appendix:PCA-loss} for details on how the PCA-based VICReg loss is implemented and empirical evidence on how it helps the representation collapse while keeping control-relevant features. We emphasize that if the visual encoder does not isolate control-irrelevant features within a dominant principal component, the PCA-based VICReg regularization becomes \emph{optional}. In this case, the bisimulation-based encoder alone can mitigate, though only partially, the influence of task-irrelevant variations in the learned representation.

\vspace{-0.1cm}
\begin{center}
\fbox{%
\colorbox{blue!4}{%
\parbox{0.92\linewidth}{%
\textbf{On the PCA-based VICReg.}  By not enforcing a minimum variance on the leading PCA components $i<i_0$, the bisimulation encoder can collapse the directions that are dominated by slow features, which concentrate in the top components of the pretrained features. At the same time, enforcing a variance floor on the tail components $i\geq i_0$ prevents $w$ $\equiv \mathrm{constant}$ and ensures that the representation retains sufficient diversity to separate control-relevant states. The covariance loss further yields information to be spread across coordinates.   
}}}
\end{center}

\noindent \textbf{Implementation.} Our bisimulation encoder is initialized with random parameters and is first trained using the standard VICReg  objective \cite{bardes2021vicreg} for a small number of epochs ($k$) to stabilize representation learning. We then perform a one-time principal component analysis to identify the dominant direction associated with task-irrelevant information. After $k$ (to be set by the practitioner) epochs, we replace the standard VICReg loss with the PCA-based one, which helps to suppresses the additional variance along the identified principal component directions.

\vspace{-0.3cm}
\subsection{Transition Model}
\vspace{-0.1cm}
We model latent dynamics using a vision transformer that predicts the next state in the bisimulation latent space. Following DINO-WM \citep{zhoudino}, the vision transformer architecture \citep{dosovitskiy2020image} is suitable for processing the underlying patch-based embeddings. Since the bisimulation latent states are already represented as sequences of patch embeddings, we remove the ViT tokenization layer, yielding a decoder-only transformer.

The transition model takes as input a history of past bisimulation latent states and actions across multiple trajectories, $\{\left(w_{j, t-H:t-1}, a_{j, t-H:t-1}\right)\}_{j=1}^{N_\text{traj}},$ where $H$ denotes the context length and $N_{\text{traj}}$ the number of trajectories, and it predicts the next latent state $\hat w_t$ (i.e., in the bisimulation space). Note that the subscript $j$ denotes the trajectory embedding and action at time step $t$.

To respect temporal causality, a causal attention mask is implemented (as in \cite{zhoudino}) ensuring that predictions at time $t$ depend only on past latent states and actions. More precisely, each patch embedding $w_t^i$ attends to the corresponding patch embeddings $\{w_{t-H:t-1}^i\}_{i=1}^{N_p}$ over the context. Predicting all patches jointly yields global spatial structure and temporal dynamics that improves temporal generalization as discussed in \cite{zhoudino}.

By ``filtering out" task-irrelevant visual features, the bisimulation encoder enables the transition model to learn dynamics that are invariant. The learned joint-embedding predictive world model is then used for planning via MPC \cite{kouvaritakis2016model}, with actions selected using the cross-entropy method (CEM).

\vspace{-0.3cm}
\subsection{Planning}
\vspace{-0.1cm}

The MPC approach selects the actions whose predicted latent rollouts reach a desired goal observation. That is, given an initial observation $o_0$ and a goal observation $o_g$, our model will first compute their latents through the pretrained visual and bisimulation encoders,

$z_0 = f_\theta(o_0),\; z_g = f_\theta(o_g), \; w_0 = h_\eta(z_0), \; w_g = h_\eta(z_g)$.
Starting from $w_0$, the transition model is used to rollout predicted states under a candidate action sequence $\{a_t\}_{t=0}^{T-1}$, i.e., $\{\hat w_{t+1}\}_{t=0}^{T-1} = T_\phi(\hat w_t, \{a_t\}_t),$  with $ \hat w_0 = w_0.$ Planning is then cast as minimizing a terminal cost that is the difference between the predicted final latent state and the goal latent state,
\begin{align}\label{goal_reaching_cost}
c(w_T, \{a_t\}_{t}) = \left\| \hat w_T(\{a_t\}_{t}) - w_g \right\|^2.
\end{align}
For brevity, we omit the explicit dependence on the action sequence and write $c(w_T) = \left\| \hat w_T - w_g \right\|^2$.

\vspace{-0.3cm}
\subsection{A Generalization Bound} \label{sec:generalization}
\vspace{-0.1cm}

Before turning to the experiments, we also provide a theoretical justification for learning bisimulation-based representations on top of JEPA latents. Let us first define the $T$-step on-policy cost-to-go as follows:\vspace{-0.1cm}
\begin{align*}
\hspace{-0.2cm} J_T^\pi(z) \triangleq \mathbb{E}\left[\sum_{t=0}^{T-1}{\small\gamma^t c\left(h_\eta(z_t)\right) \mid z_0=z}\right], \; z_{t+1}\sim P_\pi(\cdot \mid z_t),
\end{align*}
where the expectation is taken over trajectories $\{z_t\}_{t=0}^{T}$ induced by policy $\pi$ and transition kernel $P_\pi$ starting from a visual pretrained latent state $z_0=z$.

\begin{theorem}[Reward-free planning generalization bound]\label{thm:rf_generalization} Suppose $h_\eta(\cdot)$ $\varepsilon$-approximately preserves the reward-free bisimulation metric in the sense that
$$
\left|\left\|h_\eta(z)-h_\eta(z')\right\|_2-d_\pi(z,z')\right|\le \varepsilon, \; \text{ for any } z,z'.\vspace{-0.05cm}
$$

In addition, suppose that $h_\eta(\cdot)$ is uniformly bounded as  $\left\|h_\eta(z)\right\|_2\leq {H}_w$ for all $z$ and thus $\left\|w_g\right\|_2\leq {H}_w$. Then for any two pretrained visual latent embeddings $z,z'$,
\vspace{-0.1cm}
\begin{align*}
\left|J_T^\pi(z)-J_T^\pi(z')\right| \leq 8 {H}_wTd_\pi(z,z'). 
\end{align*}
\end{theorem}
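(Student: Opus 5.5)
The approach is to reduce the difference of cost-to-go values to a per-step Lipschitz estimate for the composite cost $g \triangleq c\circ h_\eta$, and then propagate that estimate through the on-policy dynamics with Wasserstein couplings. Throughout, $d_\pi$ denotes the reward-free metric, i.e.\ Definition~\ref{def:onpolicy_bisim} with the reward term dropped. We read it as the fixed point $d_\pi(z,z')=\gamma\,W_1\!\left(P_\pi(\cdot\mid z),P_\pi(\cdot\mid z')\right)$, where $W_1$ is taken with respect to $d_\pi$ itself.

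\textbf{Step 1 (cost is Lipschitz in $w$).} For $w,w'$ with $\|w\|_2,\|w'\|_2,\|w_g\|_2\le H_w$ I write
\begin{align*}
c(w)-c(w') = \langle w-w',\, w+w'-2w_g\rangle .
\end{align*}
By Cauchy--Schwarz this gives $|c(w)-c(w')|\le 4H_w\|w-w'\|_2$.

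\textbf{Step 2 (transfer to $d_\pi$).} The $\varepsilon$-preservation hypothesis gives $\|h_\eta(z)-h_\eta(z')\|_2\le d_\pi(z,z')+\varepsilon$. Hence
\begin{align*}
|g(z)-g(z')| \le 4H_w\left(d_\pi(z,z')+\varepsilon\right).
\end{align*}

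\textbf{Step 3 (propagation along trajectories).} I will build a coupling of the two chains started at $z$ and $z'$ by gluing, at each step, an optimal $W_1$-coupling of $P_\pi(\cdot\mid z_t)$ and $P_\pi(\cdot\mid z'_t)$ (measurable selection or a gluing lemma). The fixed-point identity then gives $\mathbb{E}[d_\pi(z_{t+1},z'_{t+1})\mid z_t,z'_t]=\gamma^{-1}d_\pi(z_t,z'_t)$. Inducting on $t$ yields $\gamma^t\,\mathbb{E}[d_\pi(z_t,z'_t)]= d_\pi(z,z')$. The geometric growth of distances is therefore exactly cancelled by the discount in $J_T^\pi$. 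Summing over $t=0,\dots,T-1$,
\begin{align*}
|J_T^\pi(z)-J_T^\pi(z')| \le \sum_{t=0}^{T-1}\gamma^t\,\mathbb{E}|g(z_t)-g(z'_t)| \le 4H_w T\, d_\pi(z,z') + 4H_w\varepsilon\sum_{t}\gamma^t .
\end{align*}

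\textbf{Main obstacle: the $\varepsilon$ term.} The stated bound has constant $8=2\cdot 4$ and no $\varepsilon$, so the additive slack must be absorbed into a second copy of $4H_wT\,d_\pi(z,z')$. My first attempt would be to work in the regime $\varepsilon\le d_\pi(z_t,z'_t)$ along the coupling, so that $\|h_\eta(z_t)-h_\eta(z'_t)\|_2\le 2d_\pi(z_t,z'_t)$. This gives exactly $8H_wT\,d_\pi(z,z')$.

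When instead $d_\pi(z_t,z'_t)<\varepsilon$, I would use the trivial bound $|g|\le 4H_w^2$. Alternatively, I would state explicitly that the result is read with $\varepsilon$ small relative to $d_\pi$, or with an additional $+8H_wT\varepsilon$ term.

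The other delicate point is justifying the gluing of optimal couplings in a general (continuous) latent space. I would handle it with the standard gluing lemma and the Kantorovich--Rubinstein duality, since $d_\pi$ is a pseudometric.
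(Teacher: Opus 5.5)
Your argument is correct, under the same extra assumption the paper itself has to add, but it is organized differently.

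\textbf{Comparison with the paper's route.} The paper never couples trajectories. It uses the recursion $J_T^\pi(z)=c(h_\eta(z))+\gamma\,\mathbb{E}_{\tilde z\sim P_\pi(\cdot\mid z)}[J_{T-1}^\pi(\tilde z)]$ and inducts on the claim that $J_T^\pi$ is $L_T$-Lipschitz with respect to $d_\pi$. It controls the expectation term by Kantorovich--Rubinstein duality, $\gamma L_{T-1}W_1=L_{T-1}d_\pi$, which is the same cancellation you exploit. It then absorbs $4H_w\varepsilon\le 4H_w d_\pi(z,z')$ at \emph{every} step, which is where $L_T=L_{T-1}+8H_w$ and the constant $8$ come from.

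You instead unroll the horizon with a Markovian coupling and carry $\varepsilon$ additively. This gives the sharper bound
\[
|J_T^\pi(z)-J_T^\pi(z')|\le 4H_wT\,d_\pi(z,z')+4H_w\varepsilon\,\frac{1-\gamma^T}{1-\gamma}.
\]
The difference is not cosmetic. The paper's per-step absorption is valid only for pairs with $d_\pi\ge\varepsilon$. So the global Lipschitz property that the duality step needs at the next induction step is never actually re-established. Your bookkeeping needs the condition only once.

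The paper's route, in turn, works one step at a time for a fixed pair and so avoids your measurable-selection issue. You can get both advantages. Induct on the affine statement $|J_T^\pi(u)-J_T^\pi(v)|\le A_T d_\pi(u,v)+B_T$ for all $u,v$. At each step, bound the expectation gap by $\int|J_{T-1}^\pi(u)-J_{T-1}^\pi(v)|\,d\Gamma$ for a near-optimal coupling $\Gamma$ of $P_\pi(\cdot\mid z)$ and $P_\pi(\cdot\mid z')$. Duality handles Lipschitz bounds, not affine ones, but couplings handle both. This yields $A_T=4H_wT$ and $B_T=4H_w\varepsilon\sum_{t<T}\gamma^t$.

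\textbf{Your main obstacle.} Do not assume $\varepsilon\le d_\pi(z_t,z_t')$ along the coupling: that is a condition on random later pairs that nothing guarantees. The $4H_w^2$ fallback also does not give a bound proportional to $d_\pi(z,z')$. Your display already finishes the job. Since $\sum_{t<T}\gamma^t\le T$, it is at most $4H_wT\,(d_\pi(z,z')+\varepsilon)\le 8H_wT\,d_\pi(z,z')$ whenever $\varepsilon\le d_\pi(z,z')$ for the \emph{initial} pair. That is exactly the assumption the paper adds in the last line of its proof.

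Some such assumption is unavoidable. For $T=1$ we have $J_1^\pi=c\circ h_\eta$. A pair with $d_\pi(z,z')=0$ but $c(h_\eta(z))\neq c(h_\eta(z'))$, which is permitted when $\varepsilon>0$, violates the stated inequality.

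\textbf{A caveat you share with the paper.} Under the fixed-point reading, the hypotheses give $\sup d_\pi\le 2H_w+\varepsilon<\infty$. Then $d_\pi=\gamma W_1\le\gamma\sup d_\pi$ forces $d_\pi\equiv 0$. Combined with $\varepsilon\le d_\pi(z,z')$, this forces $\varepsilon=0$ and $h_\eta$ constant, so the bound holds only trivially. This is a defect of the statement, not of your argument.
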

We defer the proof of this result to Appendix \ref{appendix:proof_generalization}.

We also note that the linear dependence on the horizon $T$ contrasts with the $(1-\gamma^{T})/(1-\gamma)$ dependence from \citep{zhanglearning, shimizubisimulation}. This difference arises from dropping the reward term in the bisimulation metric: while reward-aware bisimulation enables geometric discounting of cost discrepancies through immediate reward alignment, a reward-free metric propagates the discrepancies through the transition dynamics, yielding a cumulative horizon-dependent bound. In this sense, the $T$ factor can be interpreted as the price paid for enforcing control-relevant invariance without task-specific reward supervision. \vspace{-0.1cm}

\begin{center}
\fbox{%
\colorbox{blue!4}{%
\parbox{0.92\linewidth}{%
\textbf{Key Takeaway.} The above result implies that if two latent states in the pretrained visual encoder space differ only due to nuisance visual variations, such as slow features, then their induced planning behavior remains similar. More concretely, let $z = f_\theta(o)$ embed a PointMaze scene with a checkerboard background, and let $z' = f_\theta(o')$ embed the same scene with a gradient background (see Figure \ref{fig:encoding_motivation}), since these visual changes do not affect on-policy transition behavior, the bisimulation distance $d_\pi(z, z')$ is small. Consequently, the difference in the induced planning cost is bounded by $\mathcal{O}\left(d_\pi(z, z')\, T\right)$. Therefore, robustness of planning when starting from $w$ or $w'$ in the bisimulation latent space is governed by their distance in that space.
}}}
\end{center}

\vspace{-0.2cm}
\section{Experiments}\label{sec:experiments}
\vspace{-0.1cm}

We now evaluate our world model. Our experiments assess robustness to slow, task-irrelevant visual features, such as background changes and distractors, and isolate their impact on the planning performance. We compare our model to DINO-WM \cite{zhoudino} and domain randomization (DR) \cite{tobin2017domain} for a simple navigation task under different visual conditions, and pretrained visual encoders.

\vspace{-0.3cm}
\subsection{A Simple Navigation Task}\label{sec:tasks}

\vspace{-0.1cm}

We evaluate our model on a simple navigation task: the MuJoCo PointMaze \cite{fu2020d4rl}.  This is commonly used as benchmark for goal-conditioned planning and navigation from visual observations \cite{fu2020d4rl, sobal2025learning, zhoudino, parthasarathy2025closing}. Despite its simplicity, PointMaze provides a controlled setting to assess robustness to background changes and distractors, while keeping the transition dynamics fixed. Further details on the task setup are provided in Appendix \ref{appendix:tasks}.

Although PointMaze is a fairly simple navigation task, we observe that DINO-WM has a substantial drop in the performance under test-time background changes and moving distractors. This highlights that even for a simple task, sensitivity to slow features can negatively impact planning, motivating the need for an invariant latent representation.

\noindent \textbf{Evaluation Metric.} We report performance using the success rate (SR), defined as the fraction of successful episodes over $50$ independently sampled initial and goal pairs. An episode is considered successful if the agent approximately reaches the goal region within a fixed number of steps.

\subsection{Baseline}

We compare our model with two baselines: DINO-WM \cite{zhoudino} and domain DR \cite{tobin2017domain}. Both baselines are evaluated under the same task, planning horizon, and evaluation metric previously described.

\noindent \textbf{DINO-WM.} We use the original DINO-WM architecture as proposed in \cite{zhoudino}. The model operates directly on pretrained DINOv2 patch embeddings with $196$ patches, each of dimensionality $384$, and learns latent dynamics without explicit invariance regularization. DINO-WM is trained with a batch size of $32$.

\noindent \textbf{Domain Randomization (DR).} We also train DINO-WM with domain randomization \cite{tobin2017domain}. In particular, we randomize the visual appearance of the task for $40\%$ of the training dataset by applying background color changes and gradient-based background modifications, for the PointMaze task. To avoid introducing spurious temporal artifacts, the randomized background is kept consistent across trajectories. We refer the reader to Appendix \ref{appendix:domain_randn}.

Our model adopts the same number of visual patches (i.e., $196$) as DINO-WM, but introduce a bisimulation encoder that maps each patch to a $32$-dimensional latent embedding (see Figure \ref{fig:bisim_architecture}). This leads to a latent space that is $10\times$ lower-dimensional than the $384$-dimensional patch embeddings used by DINO-WM. Our model is trained with a batch size of $20$. We refer the reader to Appendix \ref{appendix:hyperparamters} for details on the model hyperparameters.

\vspace{-0.3cm}
\subsection{Background Changes}\label{sec:backgrounds}
\vspace{-0.1cm}

We consider different background changes and a moving distractor (see Figure \ref{fig:PointMaze_Backgrounds}) that alter the visual appearance of the scene while preserving the underlying transition dynamics. These changes are applied only at test time, unless otherwise stated, and are designed to isolate the effect of slow features on the planning performance.

\begin{figure}[t!]
    \centering
    \includegraphics[width=1\linewidth]{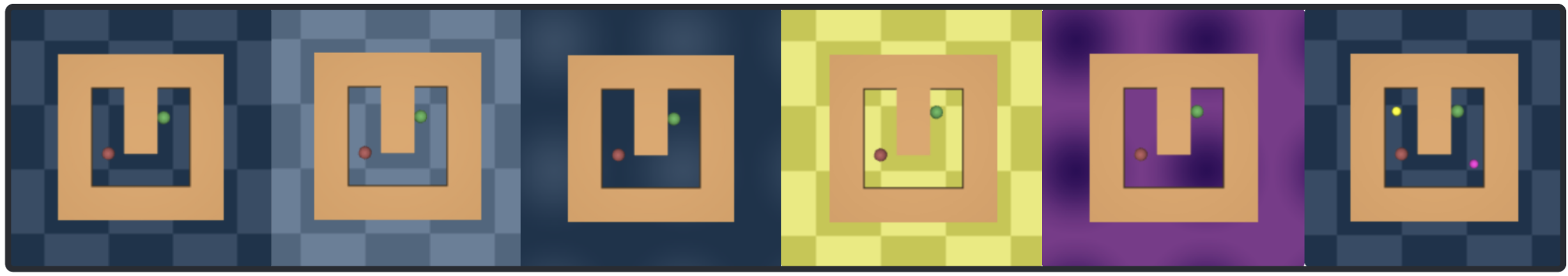}
    \vspace{-0.7cm}
    \caption{The six different scenarios that we use to measure robustness based on PointMaze with checkerboard background. From left to right, these are: {\bf NC}: No Change, {\bf SC}: Slight background Change , {\bf C}: Color gradient background, {\bf LC}: Large Color background Change, {\bf LCG}: Large Color Gradient background change, and {\bf D}: moving Distractors with yellow and magenta dots.}
    \label{fig:PointMaze_Backgrounds}
    \vspace{-0.4cm}
\end{figure}

\vspace{-0.2cm}
\subsection{Results}
\vspace{-0.1cm}
Table \ref{tab:main_results2} reports the success rate for the PointMaze task under background changes and a moving distractor. Results for PointMaze correspond to models trained for $50$ epochs. 

\noindent \textbf{Robustness to the background.} We note that DINO-WM displays degradation in performance when the visual background is different from the one seen when training (NC). For instance, in the PointMaze task, while DINO-WM achieves $0.8$ success rate under NC, its performance drops for SC, $0.72$, and it drops even more under aggressive changes, such as LCG, achieving $0.48$ success rate. 

In contrast, our model consistently achieves high success rates across background changes in PointMaze (around $0.76$ success). This highlights that invariance with the bisimulation encoder effectively ``filters out" background-dependent features from the latent dynamics, leading to reliable planning despite significant visual changes.

\noindent \textbf{Comparison with DR.} Domain randomization performs well when test-time backgrounds resemble combinations of visual changes observed during training ($0.82$ success rate for NC, SC, and C). However, its performance degrades substantially under an aggressive background change (LCG) that are not covered by the randomized training distribution, i.e., achieving $0.64$ success rate.

\noindent \textbf{Moving distractor.} Under the moving distractor, DINO-WM again has its performance dropped, indicating that persistent but task-irrelevant motion in the background can also dominate the learned latent dynamics. Our model remains consistent, as the distractor does not affect the transition dynamics and is then suppressed by the bisimulation encoder.

 \vspace{-0.6cm}
\begin{table}[h]
\centering
\caption{Success rate of DINO-Bisim under different scenarios (Figure \ref{fig:PointMaze_Backgrounds}) compared to DINO-WM with and without DR.\vspace{0.2cm}}
\begin{tabular}{lcccccc}
\toprule
\textbf{Model} 
& \textbf{NC}   
& \textbf{SC}   
& \textbf{C}    
& \textbf{LC}
& \textbf{LCG}
& \textbf{D} \\
\midrule
{\small DINO-WM}  & 0.80 & 0.72 & 0.60 & 0.56 & 0.48 & 0.78\\
{\small w/DR} & 0.82 & 0.82 & 0.82 & 0.68 & 0.64 & 0.82 \\
\textbf{Ours} 
          & {0.78} & {0.80}  & {0.76} & {0.86} & {0.78}  & {0.82}\\
\bottomrule
\end{tabular}
\label{tab:main_results2}
\vspace{-0.3cm}
\end{table}

\vspace{-0.1cm}
\subsection{On Different Pretrained Visual Encoders}
\vspace{-0.1cm}

Table \ref{tab:2ndmain_results} reports the success rate achieved by our model when using different pretrained visual encoders. We observe that training the model \emph{end-to-end} without DINOv2 features results in a clear degradation in performance (e.g., a success rate of $0.26$ under LC) compared to training with DINOv2 ($0.86$ success rate under LC). This behavior is expected, as learning accurate latent transition dynamics relies on sufficiently informative visual representations. 

We further evaluate robustness using SimDINOv2 and iBOT features. SimDINOv2 (with ViT-B/16) produces patch embeddings of higher dimensionality than DINOv2 (768 compared to the 384 of DINOv2). As all models are trained with the same number of samples, the larger feature dimensionality may lead to a lower success rate (for example due to overfitting). Nevertheless, performance remains consistent across background changes, indicating that our bisimulation encoder continues to suppress task-irrelevant visual features coming from SimDINOv2 embeddings.

On the other hand, with iBOT features our model achieves strong planning performance (around $0.72$ success rate) and robustness comparable to the model with DINOv2, further indicating that the proposed approach generalizes to different pretrained visual encoders.

\begin{table}[h]
\centering
\vspace{-0.5cm}
\caption{{Success rate with different pretrained visual encoder backbones under the different scenarios from Figure \ref{fig:PointMaze_Backgrounds}.} \vspace{0.2cm}}
\begin{tabular}{lcccccc}
\toprule
\textbf{Model} 
& \textbf{NC}   
& \textbf{SC}   
& \textbf{C}    
& \textbf{LC} 
& \textbf{LCG} 
& \textbf{D} \\
\midrule
{\small No Encoder}  & 0.68 & 0.44 & 0.70 & 0.26 & 0.36 & 0.64  \\
{\small \textbf{DINOv2}} & {0.78} & {0.8}  & {0.76} & {0.86} & {0.78} & {0.82} \\
{\small SimDINOv2}  & 0.40 & 0.38 & 0.36 & 0.42 &  0.42 & 0.36\\
{\small iBOT}  & 0.72 & 0.70 & 0.74 & 0.72 & 0.72 & 0.72 \\
\bottomrule
\end{tabular}
\label{tab:2ndmain_results}
\vspace{-0.5cm}
\end{table}

\vspace{-0.1cm}
\subsection{On the Reward-aware Bisimulation Encoder}
\vspace{-0.1cm}

Prior work on bisimulation-based MPC incorporates an explicit reward predictor as part of the bisimulation objective to estimate rewards for planning \cite{shimizubisimulation} (see Appendix \ref{appendix:archictecture}). While such a design is important for value approximation or policy transfer, it is not fundamental for our JEPA. Here the goal is to learn an invariant latent representation that preserves control-relevant transition dynamics rather than task-specific reward structure. 

Empirically, when a reward predictor is included, Figure \ref{fig:training_metrics} shows that the reward distance converges to near zero early in training and provides little additional signal for improving accuracy in the transition dynamics. This suggests that, for the PointMaze task, reward prediction does not contribute beyond what is already captured by the transition dynamics.

\vspace{-0.3cm}
\section{Discussions and Future Work}
\vspace{-0.1cm}

We show that DINO-WM is not robust under background changes and distractors. By enforcing control-relevant invariance via bisimulation on top of pretrained visual features, we obtain latent embeddings $10\times$ lower-dimensional than that of DINO-WM that are robust for planning under visual shifts, without reward supervision.

We note two key limitations of our work: $1)$ we assume access to sufficiently informative pretrained visual features. While we show robustness across different self-supervised encoders (Table \ref{tab:2ndmain_results}), performance degrades when the visual representations lack semantic structure (i.e., without DINOv2), indicating that bisimulation-based invariance cannot compensate for poor visual features. $2)$ Our theoretical analysis focuses on reward-free bisimulation with goal-reaching costs and yields a horizon-dependent bound. While this aligns with JEPAs, it yields looser bounds than its reward-aware counterpart. Understanding whether $\mathcal{O}(T)$ is fundamental remains open.

Several directions come naturally from this work. First, we can integrate the bisimulation encoder in gradient-based planning \cite{parthasarathy2025closing}, which may benefit from smoother and invariant latent dynamics. Another direction is to combine the bisimulation invariance with adaptive policies, enabling online refinement of the invariant representation as the agent encounters new visual conditions.

From a theoretical perspective, extending our analysis to reward-aware bisimulation could clarify the trade-offs between JEPAs' modularity and tighter generalization bounds. Finally, considering complex manipulation tasks and long-horizon planning problems is under current implementation.

\vspace{-0.3cm}
\section*{Impact Statement}
\vspace{-0.1cm}
This work aims to advance robust, task-agnostic visual world models for planning by improving representation learning under visual distribution shifts. There are many potential societal impacts of this work, none of which we feel must be specifically highlighted here.

\vspace{-0.3cm}
\section*{Acknowledgments}
\vspace{-0.1cm}

The authors thank Vlad Sobal and Tim G. J. Rudner for instructive discussions on this work. Leonardo F. Toso is funded by the Center for AI and Responsible Financial Innovation (CAIRFI) Fellowship and by the Columbia Presidential Fellowship. James Anderson is partially funded by NSF grants ECCS 2144634 and 2231350 and the Columbia Center of AI Technology in collaboration with Amazon.

\balance
\vspace{-0.3cm}
\bibliography{references}
\bibliographystyle{icml2025}

\newpage
\onecolumn
\appendix

\section{Appendix}   

This appendix provides additional details that complement the main text. We include specifics on the architecture and implementation of the proposed bisimulation encoder and transition model, further discussion of the PCA-based variance-covariance regularization, and extended descriptions of the experimental setup, including task configurations, domain randomization, and planning details. We also provide the proofs for the generalization bound in Section \ref{sec:generalization}. 

\addcontentsline{toc}{section}{Appendix}

\subsection{Model Architecture} \label{appendix:archictecture}

This section provides additional details on the architecture of the proposed joint-embedding predictive world model with bisimulation encoder.

\subsubsection{Overview}

The DINO-Bisim model takes visual observations, proprioceptive inputs, and actions at the current timestep and predicts the next bisimulation latent state, along with an optional reward prediction. Visual observations are encoded using a pretrained DINOv2 ViT-S/14 visual encoder \cite{oquab2023dinov2}, followed by a bisimulation encoder implemented as an MLP to produce the latent state. Proprioceptive inputs and actions are embedded using lightweight 1D convolutional encoders. The resulting embeddings are concatenated and processed by a 6-layer transformer to predict the next bisimulation latent state. A separate reward MLP predicts the reward from the latent state, however, this component is not used in our experiments, as discussed in Section~\ref{sec:experiments} and throughout the main text.

\begin{figure}[h]
    \centering
    \includegraphics[width=1\linewidth]{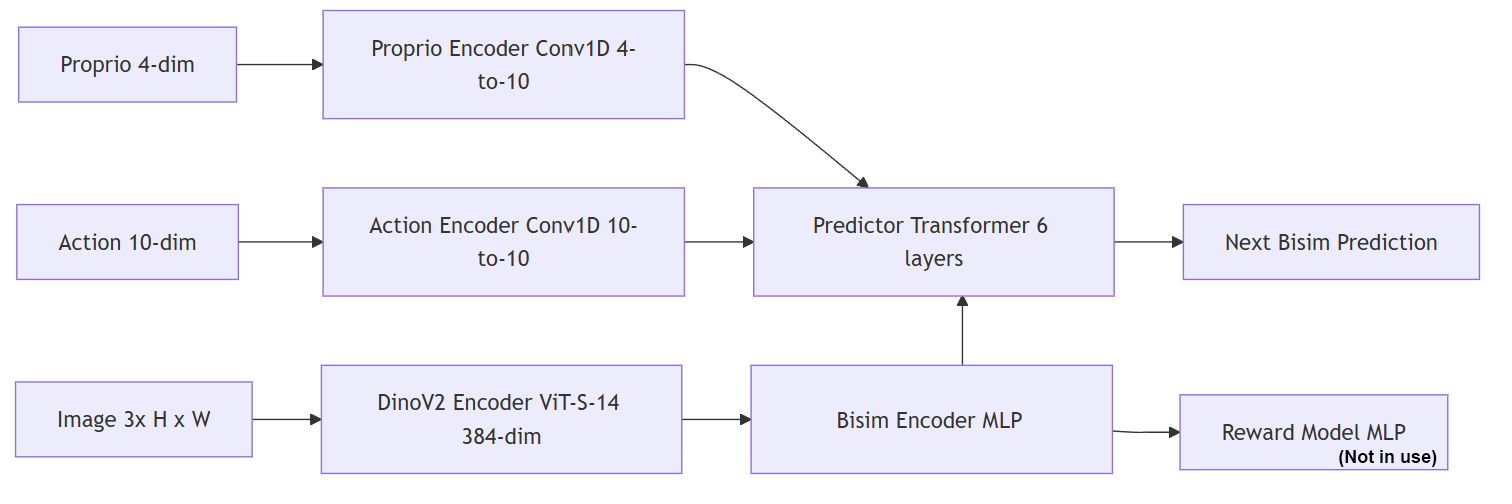}
    \caption{Overview of the DINO-Bisim architecture.}
    \label{fig:dino_bisim_overview}
\end{figure}

\subsubsection{Bisimulation Memory Buffer (Replay Buffer)}
Recalling the definition of the bisimulation metric introduced in Section \ref{subsec:bisimulation} and in \cite{castro2020scalable}, the corresponding loss involves computing differences between selected pairs of states. Since a single training iteration can only leverage the current batch of $20$ states, we introduce a bisimulation memory buffer that stores up to $1000$ previously generated states, also following \cite{zhanglearning}. This buffer allows comparisons with a larger and more diverse set of past states, thereby improving the stability and effectiveness of the bisimulation representation learning.

\subsubsection{Patch design in Bisimulation Model} \label{appendix:patch_design}
\begin{figure}[h!]
    \centering
    \includegraphics[width=1\linewidth]{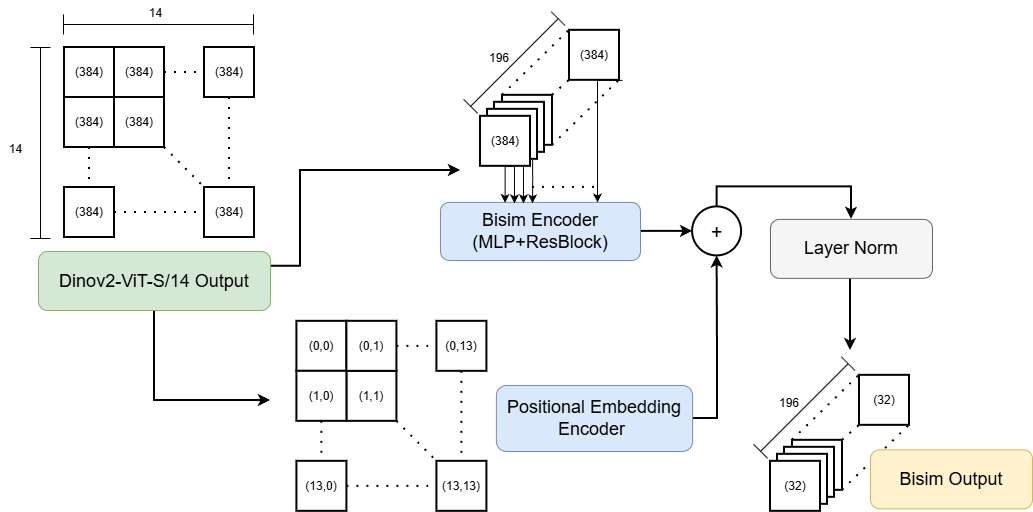}
    \caption{Bisimulation encoder with patch-based design, with $196$ of patches and $32$ output patch dimension.}
    \label{fig:bisim_architecture}
\end{figure}

\noindent \textbf{Why Patch-based Design?} The bisimulation model adopts a patch-based design, in which visual inputs are partitioned into spatial patches and each patch is encoded independently. This design builds upon the bisimulation encoder in \cite{shimizubisimulation} and naturally aligns with the output format of DINOv2 and other patch-based visual encoders. Beyond architectural compatibility, the patch-based formulation substantially reduces the number of trainable parameters, leading to a more compact and efficient model while empirically improving performance.

More concretely, an example architecture using the hyperparameters in Table \ref{tab:bisimulation_params} is illustrated in Figure \ref{fig:bisim_architecture}. Rather than flattening a feature map of size $14 \times 14 \times 384$ into a $75264$-dimensional vector as input to the first layer, the patch-based bisimulation encoder performs $196$ parallel forward passes using a shared MLP whose input dimension is $384$. As a result, the number of parameters in the first layer is reduced from $(75,264 \times 256) + 256 = 19,267,840$ to $(384 \times 256) + 256 = 98,560$, where $256$ is the hidden layer dimension, with only negligible additional overhead from the spatial embedding parameters.

\noindent \textbf{Latent Space Dimensionality Comparison.} DINO-WM operates on patch-based visual representations produced by DINOv2, consisting of $196$ patches with $384$-dimensional embeddings each, resulting in a latent representation of size $196 \times 384$. In contrast, our patch-based bisimulation encoder preserves the same spatial structure but maps each patch to a low-dimensional latent embedding. This yields a representation of size $196 \times 32$. Consequently, our model operates in a latent space that is more than $10\times$ smaller than that of DINO-WM, while maintaining improved robustness to slow, task-irrelevant features.

\subsubsection{Hyperparameters} \label{appendix:hyperparamters}

We now summarizes the hyperparameters used in our experiments. 

\vspace{-0.5cm}
\begin{table}[h!]
\centering
\begin{minipage}[t]{0.48\columnwidth}
\centering
\caption{Shared hyperparameters for DINO-Bisim training.}
\vspace{0.2cm}
\label{tab:dino_bsmpc_hparams}
\begin{tabular}{lc}
\toprule
\textbf{Name} & \textbf{Value} \\
\midrule
Image size        & 224   \\
Optimizer         & AdamW \\
Decoder learning rate        & $3\mathrm{e}{-4}$ \\
Predictor learning rate     & $1\mathrm{e}{-5}$ \\
Action encoder learning rate & $1\mathrm{e}{-4}$ \\
Bisimulation learning rate         & $5\mathrm{e}{-7}$ \\
Action embedding dim.    & 10    \\
Proprio embedding dim.   & 10    \\
Epochs            & 50   \\
Batch size        & 20    \\
\bottomrule
\end{tabular}
\end{minipage}
\hfill
\begin{minipage}[t]{0.48\columnwidth}
\centering
\caption{Bisimulation encoder hyperparameters.}
\vspace{0.2cm}
\label{tab:bisimulation_params}
\begin{tabular}{lc}
\toprule
\textbf{Name} & \textbf{Value} \\
\midrule
Bisimulation memory buffer size & 1000 \\
Bisimulation state comparison size   & 200  \\
Action dim.      & 10   \\
Num patches     & 196  \\
Latent dim.      & 32    \\
Patch dim.       & 32    \\
Patch embedding dim.  & 384  \\
\bottomrule
\end{tabular}
\end{minipage}
\end{table}

We list below the main hyperparameters used by the bisimulation encoder and transition model. These parameters control the dimensionality of the latent representation, the size of the comparison set used for bisimulation representation learning, and the dimensionality of visual and action embeddings.

\noindent \textbf{Bisimulation Memory Buffer Size and Comparison Size.}
The capacity of the bisimulation memory buffer and the number of stored latent states used for pairwise comparisons when computing the bisimulation loss.

\noindent \textbf{Action Dimension (\texttt{action\_dim}).}
The dimensionality of the encoded action representation, used as input to the reward predictor that is not in use for our experiments.

\noindent \textbf{Number of Patches (\texttt{num\_patches}).}
The number of spatial patches output by the bisimulation encoder, matching the patch decomposition of the visual pretrained encoder.

\noindent \textbf{Latent Dimension (\texttt{latent\_dim}).}
The dimensionality of the bisimulation latent space for each patch.

\noindent \textbf{Patch Dimension (\texttt{patch\_dim}).}
The dimensionality of each patch embedding produced by the bisimulation encoder; in our implementation, this is set equal to \texttt{latent\_dim}.

\noindent \textbf{Patch Embedding Dimension (\texttt{patch\_emb\_dim}).}
The dimensionality of the input patch embeddings provided by the pretrained visual encoder, matching the output dimension of DINOv2.

\subsection{PCA-based Variance Covariance Regularization} \label{appendix:PCA-loss}

\noindent\textbf{Setup.} Our experiments use DINOv2, SimDINOv2 and iBOT as pretrained visual encoder, which encode background appearance along the dominant principal components (see Section~\ref{subs:PCA-loss}). As noted in the DINOv2 paper \cite{oquab2023dinov2}, background information can be attenuated by thresholding the first principal component. Consistent with this observation and our illustration in Figures \ref{fig:pca-pointmaze} and \ref{fig:pca_background}, we make the preliminary assessment that task-irrelevant appearance factors predominantly align with the first principal component direction in the latent feature space.

\noindent\textbf{Method.} The bisimulation encoder is initialized with random parameters and is first trained using the standard VICReg objective for a small number of epochs to stabilize representation learning. We then perform a one-time principal component analysis to identify the dominant direction associated with task-irrelevant variation. After a number of warm-up epochs, we replace the standard VICReg loss with the proposed PCA-based VICReg objective, which helps to suppresses variance along the identified principal component directions.

An example training run with a vectorized bisimulation latent representation of dimension $196 \times 32$ (corresponding to the number of patches and per-patch latent dimensionality), where PCA-based VICReg is enabled after $k=50$ epochs, is shown in Figure~\ref{fig:pca_loss_demo}.

\noindent\textbf{Result.} While this staged regularization approach improves the stability of invariant representation learning, we hypothesize that a tighter integration of PCA-based regularization throughout training and planning, rather than a discrete loss swap, could further enhance invariance. Exploring such joint objectives is an interesting direction for future work.

\begin{figure}
    \centering
    \includegraphics[width=1\linewidth]{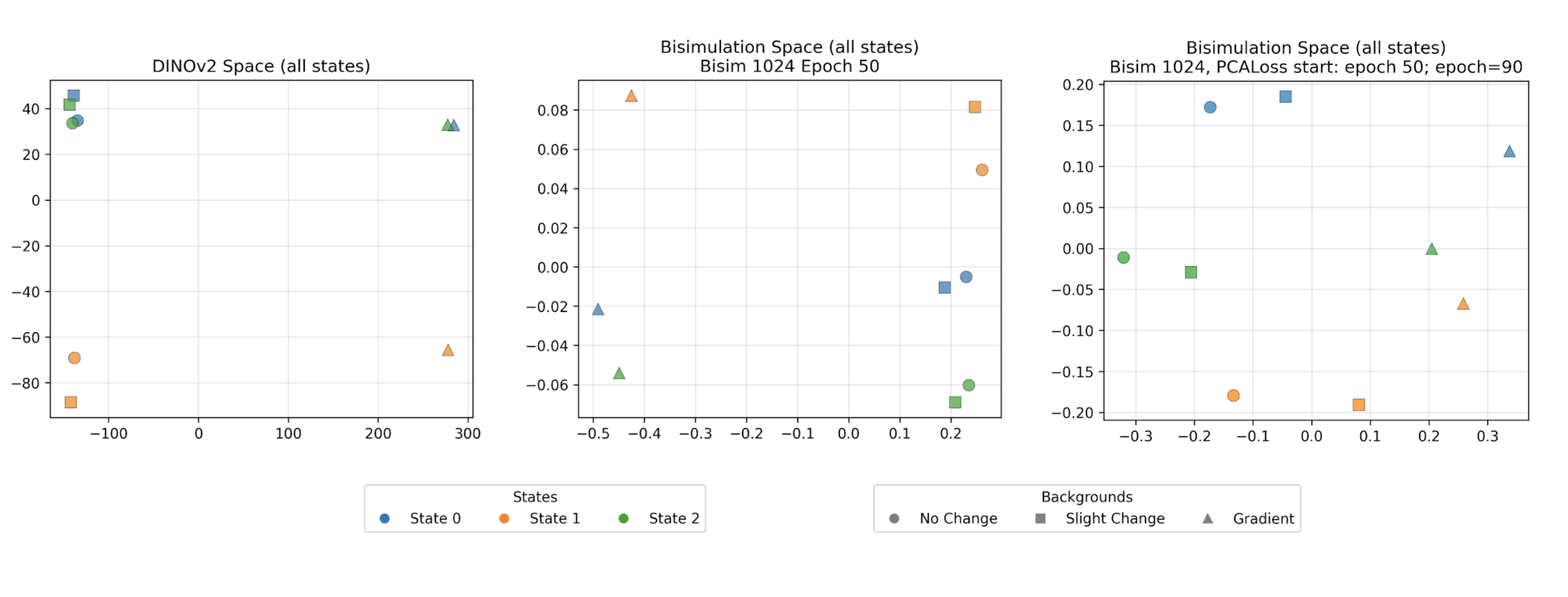}
    \vspace{-0.4cm}
    \caption{First two principal components (PCs) of three distinct states (colors) observed under three different backgrounds (marker shapes) for the PointMaze task. The goal of the bisimulation encoder is to map identical states across backgrounds to nearby latent representations. \textbf{Left.} DINOv2 embeddings $z_t = f_\theta(o_t)$ show strong separation along the first PC for identical states under different backgrounds, indicating that background variation dominates the leading principal direction. \textbf{Middle.} Bisimulation embeddings $w_t = h_\eta(z_t)$ after 50 epochs with standard VICReg still exhibit separation along the first PC, preserving background-dependent variation. \textbf{Right.} Bisimulation embeddings after 90 epochs with PCA-based VICReg (activated at epoch 50) show improved clustering of identical states across backgrounds, consistent with suppression of task-irrelevant variation.}
    \label{fig:pca_loss_demo}
\end{figure}

\begin{figure}[h]
    \centering
    \includegraphics[width=1\linewidth]{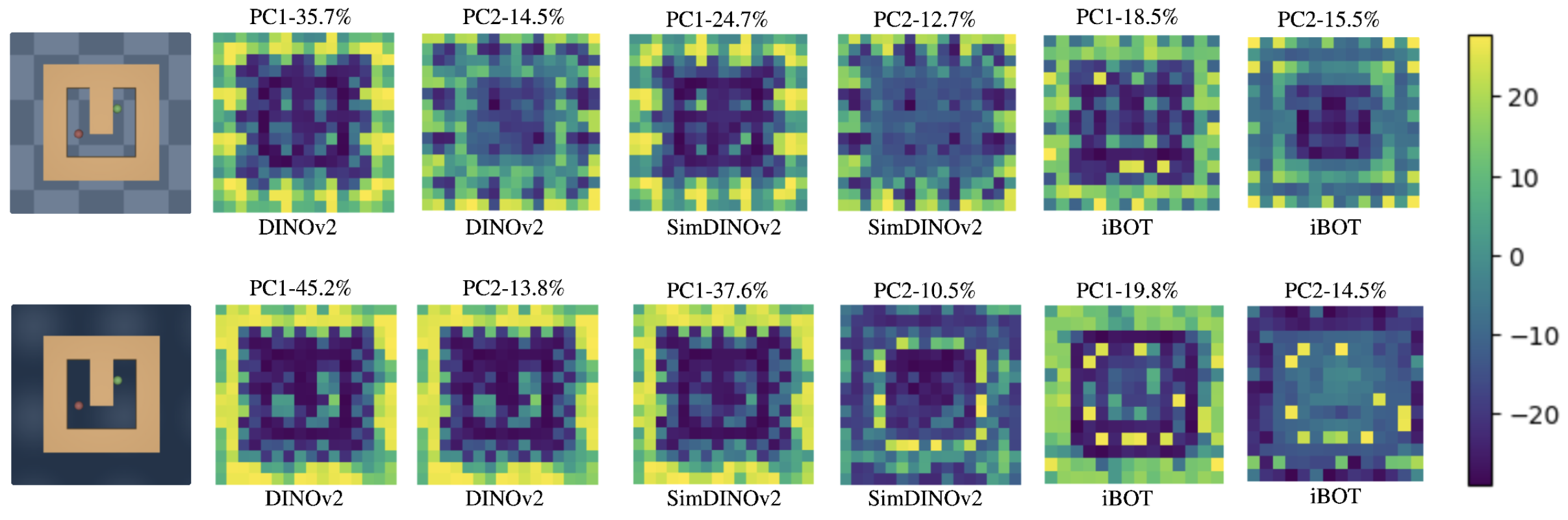}
    \vspace{-0.4cm}
    \caption{First twp principal components (PC1 and PC2) of latent embeddings produced by DINOv2, SimDINOv2, and iBOT for a PointMaze observation. In all cases, PC1 captures a large percentage of the total variance that predominantly encodes background and layout information rather than control-relevant features.}
    \label{fig:pca_background}
\end{figure}

\subsection{Additional Information on the Navigation Task} 

This section provides additional details on the simple navigation task used in our experiments, complementing the high-level description in the main text.

\label{appendix:tasks}
\begin{figure}[h]
    \centering
    \includegraphics[width=1\linewidth]{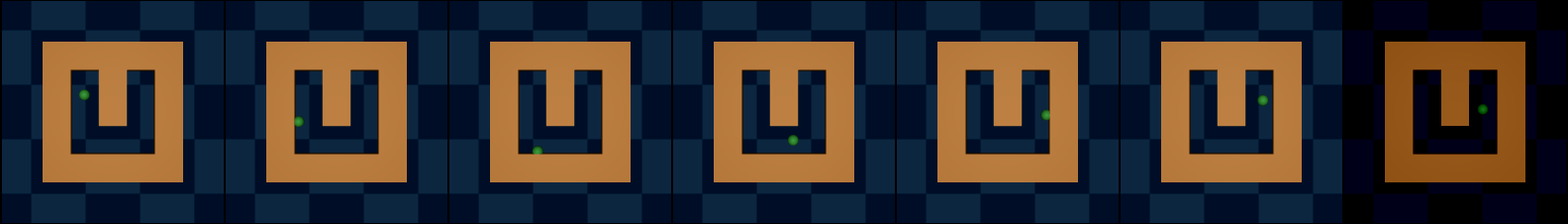}
    \vspace{-0.4cm}
    \caption{PointMaze task, from leftmost the initial state to the rightmost the goal state}
    \label{fig:PointMazeTask}
\end{figure}

PointMaze \cite{fu2020d4rl} is a force-actuated 2-DoF navigation environment in which a ball moves in the Cartesian $x$--$y$ plane with the objective of reaching a target location. The agent observes proprioceptive state information and evolves under continuous dynamics that capture velocity, acceleration, and inertia. For a fair comparison, we train all models on $2000$ fully random trajectories, matching the data setup in \cite{zhoudino}. See Figure \ref{fig:PointMazeTask} for a sequence of frames that illustrates the planning from an initial state to a goal state.

\subsection{Planning}

We optimize the action sequence using CEM. At each step, CEM samples multiple candidate action sequences, rolls them out using the learned model, selects the sequences with lowest cost \eqref{goal_reaching_cost}, and updates the sampling distribution accordingly. The first action of the optimized sequence is played, and the process repeats with a receding horizon.

Although gradient-based planning methods have recently been proposed and validated in \citep{parthasarathy2025closing} for DINO-WM, we focus on CEM for fair comparison with DINO-WM \cite{zhoudino}.

\subsection{Domain Randomization for the PointMaze Task} \label{appendix:domain_randn}

To evaluate robustness through data augmentation, we apply domain randomization to the PointMaze task by modifying the visual appearance of the environment during training. In particular, $40\%$ of the training data are rendered with randomized visual backgrounds, while the remaining trajectories retain the default appearance. The randomization includes background color changes and gradient-based background modifications, as illustrated in Figure \ref{fig:DR_examples}.

To avoid introducing spurious temporal artifacts that could make the transition dynamics learning harder, the randomized background is kept fixed within each trajectory but varies across different trajectories. Importantly, these visual modifications do not alter the underlying environment dynamics, agent behavior, or goal locations, ensuring that domain randomization affects only task-irrelevant visual features.

\begin{figure}[h]
    \centering
    \includegraphics[width=1\linewidth]{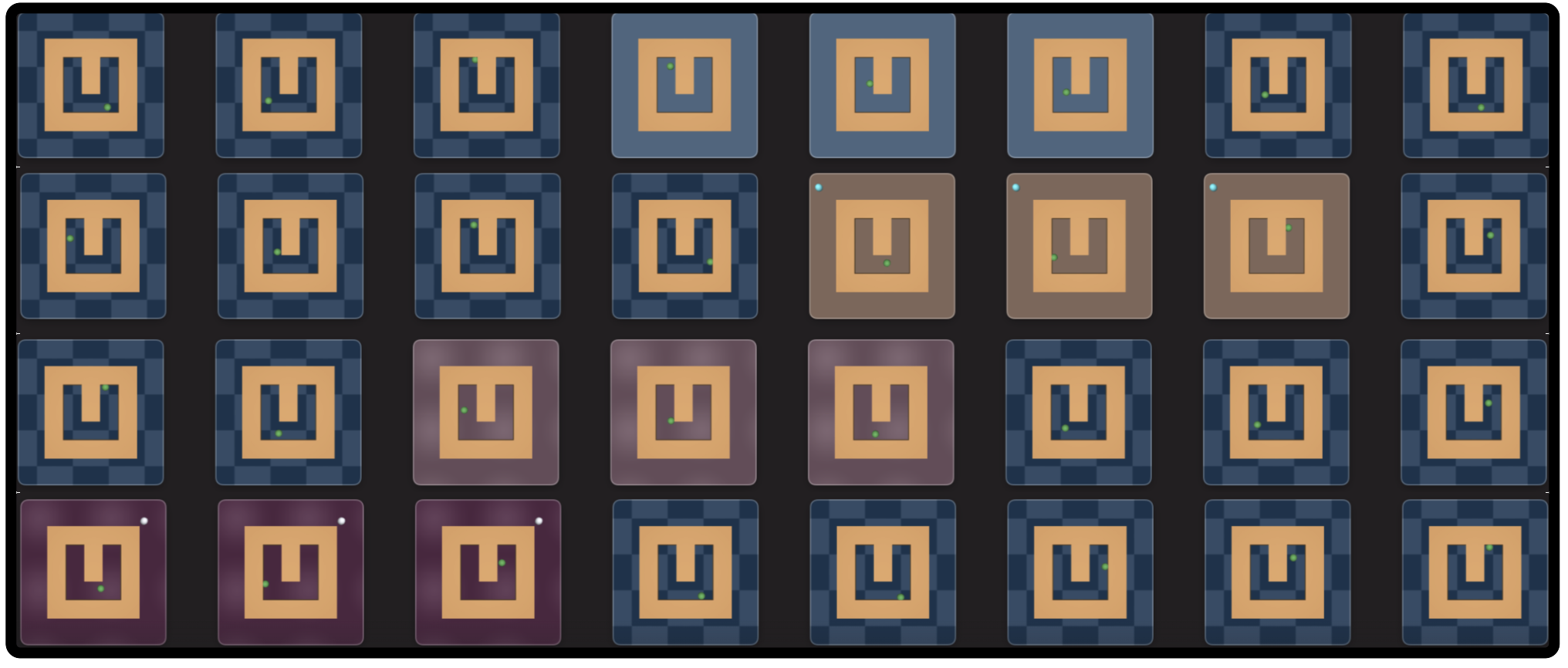}
    \vspace{-0.4cm}
    \caption{Examples of domain-randomized visual observations for the PointMaze task. During training, background appearance is randomized across trajectories using color shifts, gradient backgrounds, and fixed distractors (i.e., the dot outside the maze) while the underlying maze layout and transition dynamics remain unchanged.}
    \label{fig:DR_examples}
\end{figure}

\subsection{Reward-aware Bisimulation Encoder: Training Metrics}

Let us now analyze the training dynamics of the reward-aware bisimulation encoder to assess the contribution of the reward prediction term relative to the transition-based objectives. Following prior work on bisimulation-based MPC \cite{shimizubisimulation}, we include an auxiliary reward predictor during training, although it is not used for training nor planning in our experiments.

\begin{wrapfigure}{r}{0.5\textwidth}
    \centering
    \includegraphics[width=0.5\textwidth]{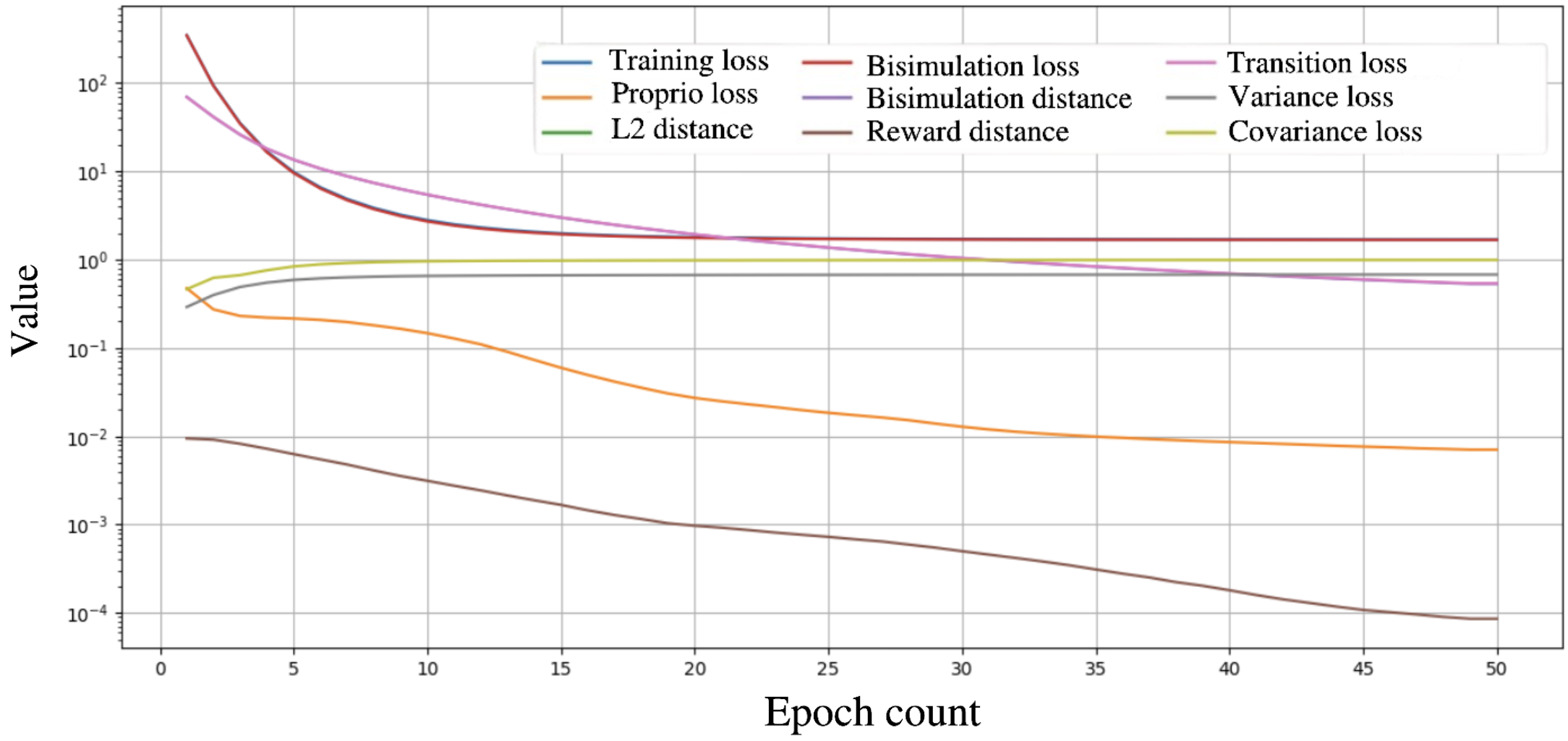}
    \vspace{-0.7cm}
    \caption{Training losses value versus the training epochs.}
    \label{fig:training_metrics}
    \vspace{-0.4cm}
\end{wrapfigure}

Figure \ref{fig:training_metrics} reports the evolution of the main loss components throughout training. We observe that the reward prediction loss rapidly converges to near zero early in training and remains negligible afterwards. This indicates that, for the navigation task considered, reward prediction provides little additional learning signal beyond what is already captured by the transition dynamics. As a result, the reward term has a negligible influence on the learned latent representation.

On the other hand, the transition losses and representation regularization terms (including the VICReg and bisimulation objectives) remain active throughout training and dominate the optimization process. These components are responsible for making the latent space to preserve control-relevant dynamics while suppressing task-irrelevant visual features. This empirical observation supports the design choice adopted in the main experiments, where we drop the reward prediction term and rely only on transition-based bisimulation to enforce invariance.

\subsection{Moving Distractors}

To evaluate robustness to dynamic but task-irrelevant visual variations, we consider two moving distractors in the PointMaze task. As illustrated in Figure \ref{fig:DR_examples},  we augment the environment with colored distractor objects (yellow and magenta dots) that move continuously within the maze while remaining decoupled from the agent dynamics and task objectives.

The distractors introduce persistent motion that can dominate DINO-WM latent dynamics, despite being irrelevant for control.  On the other hand our bisimulation-based representation is designed to suppress such task-irrelevant motion by enforcing equivalence between states that exhibit identical on-policy transition behavior. As a result, planning performance remains stable in the presence of moving distractors, whereas DINO-WM tend to drop in performance.

\begin{figure}[h]
    \centering
    \includegraphics[width=1\linewidth]{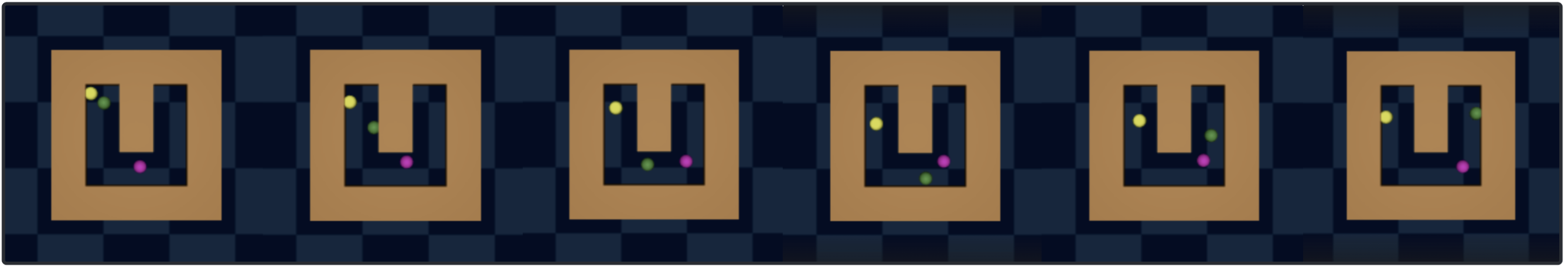}
    \vspace{-0.4cm}
    \caption{Moving distractor setting in PointMaze. Yellow and magenta dots move continuously within the maze, introducing dynamic but task-irrelevant visual variation while preserving the underlying transition dynamics.}
    \label{fig:moving_distractors_appendix}
\end{figure}

\subsection{Reward-Free Generalization Bound: Proof of Theorem \ref{thm:rf_generalization}} \label{appendix:proof_generalization}
We  first provide an auxiliary lemma regarding the 1-Wasserstein distance which we use in the proof of Theorem \ref{thm:rf_generalization}.

\begin{lemma}[Kantorovich-Rubinstein duality \cite{villani2008optimal}] \label{lemma:KR_duality}
Let $ \mu, \text{ and } \nu$ be two probability measures on the metric space $(\Omega,d_\pi)$, then
\begin{align}\label{eq:KR_duality}
W_1(\mu,\nu) = \sup_{\|f\|_{L}\le 1} \left( \mathbb{E}_{x\sim\mu}[f(x)]-
\mathbb{E}_{y\sim\nu}[f(y)]\right),   
\end{align}
where $\|\cdot\|_{L}$ is the Lipschitz norm (i.e., $\|f\|_L \leq \sup_{x\neq y}\frac{|f(x)-f(y)|}{|x-y|}$).
\end{lemma}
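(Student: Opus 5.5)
We prove the two inequalities in \eqref{eq:KR_duality} separately, taking $(\Omega,d_\pi)$ to be a Polish space and $\mu,\nu$ to have finite first moments, so that $W_1(\mu,\nu)<\infty$. We read the Lipschitz constant with respect to the metric $d_\pi$, i.e.\ $|f(x)-f(y)|\le d_\pi(x,y)$. We use the primal definition
$$W_1(\mu,\nu)=\inf_{\gamma\in\Pi(\mu,\nu)}\int d_\pi(x,y)\,d\gamma(x,y),$$
where $\Pi(\mu,\nu)$ is the set of couplings of $\mu$ and $\nu$.

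\textbf{Easy direction (sup $\le W_1$).} Fix any coupling $\gamma\in\Pi(\mu,\nu)$ and any $f$ with $\|f\|_L\le 1$. Since the marginals of $\gamma$ are $\mu$ and $\nu$,
$$\mathbb{E}_\mu[f]-\mathbb{E}_\nu[f]=\int \big(f(x)-f(y)\big)\,d\gamma(x,y)\le \int d_\pi(x,y)\,d\gamma(x,y).$$
Integrability of $f$ follows from the Lipschitz bound and the finite first moments. Taking the infimum over $\gamma$ and then the supremum over $f$ gives $\sup_f(\cdot)\le W_1(\mu,\nu)$.

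\textbf{Hard direction (sup $\ge W_1$).} We proceed in two steps.

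First, we invoke the general Kantorovich duality for a lower semicontinuous cost $c=d_\pi$:
$$W_1(\mu,\nu)=\sup\Big\{\textstyle\int\varphi\,d\mu+\int\psi\,d\nu \;:\; \varphi(x)+\psi(y)\le d_\pi(x,y)\Big\}.$$
For finitely supported $\mu,\nu$ this is exactly finite-dimensional linear programming duality. The transport LP has feasible, bounded primal and dual problems, so strong duality holds. For general $\mu,\nu$ we approximate by empirical or discretized measures. We then pass to the limit using tightness of couplings, weak compactness of $\Pi(\mu,\nu)$, and lower semicontinuity of $\gamma\mapsto\int d_\pi\,d\gamma$, as in \cite{villani2008optimal}.

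Second, we reduce to a single Lipschitz function via the $c$-transform. Given a feasible pair $(\varphi,\psi)$, define $\varphi^c(y)=\inf_x\big(d_\pi(x,y)-\varphi(x)\big)$. Then $\psi\le\varphi^c$, so replacing $\psi$ by $\varphi^c$ only increases the objective. Next define $\tilde f=(\varphi^c)^c$; this satisfies $\tilde f\ge\varphi$ and is again feasible with $\varphi^c$.

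Because $d_\pi$ is a metric, any function of the form $\inf_x\big(d_\pi(x,\cdot)-g(x)\big)$ is $1$-Lipschitz by the triangle inequality. Hence $\varphi^c$ is $1$-Lipschitz. For a $1$-Lipschitz function $u$ one has $u^c=-u$:
\begin{itemize}
\item choosing $x=y$ in the infimum gives $u^c\le -u$;
\item the Lipschitz bound $d_\pi(x,y)-u(x)\ge -u(y)$ gives $u^c\ge -u$.
\end{itemize}
Applying this with $u=\varphi^c$ shows $\tilde f=-\varphi^c$, i.e.\ $\varphi^c=-\tilde f$ with $\|\tilde f\|_L\le 1$. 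The dual objective is therefore bounded by $\mathbb{E}_\mu[\tilde f]-\mathbb{E}_\nu[\tilde f]$. Taking the supremum over feasible pairs yields $W_1\le\sup_{\|f\|_L\le1}\big(\mathbb{E}_\mu f-\mathbb{E}_\nu f\big)$.

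\textbf{Main obstacle.} The expected difficulty is the general Kantorovich duality in the first step of the hard direction, in particular the approximation argument from the discrete case, which requires the compactness and semicontinuity estimates. If a self-contained route is preferred, we would instead apply a minimax or Fenchel--Rockafellar argument on $C_b(\Omega\times\Omega)$ directly. Since this is a standard result, the proof will state the general duality and cite \cite{villani2008optimal} for it, proving only the $c$-transform reduction in detail.
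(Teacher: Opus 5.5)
Your proof is correct and goes further than the paper, which gives no argument for this lemma and simply imports it from \cite{villani2008optimal}. The only consequence the paper uses is \eqref{eq:KR_lispzhitz}, i.e.\ $\mathbb{E}_{\mu}[s]-\mathbb{E}_{\nu}[s]\le L\,W_1(\mu,\nu)$ for $L$-Lipschitz $s$, applied to $\pm J^\pi_{T-1}$ in the proof of Theorem~\ref{thm:rf_generalization}. That is exactly your easy direction. Your coupling argument proves it completely, with no duality theory, under the finite-first-moment assumption that you add and the paper leaves implicit. So your route makes the generalization bound self-contained. For the reverse inequality you rely on the same reference as the paper, and your $c$-transform reduction to $1$-Lipschitz potentials is the standard and sound derivation of the Kantorovich--Rubinstein case.

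Two refinements there. First, the double transform is unnecessary: taking $x=y$ gives $\varphi^c\le-\varphi$, so the dual objective is already at most $\mathbb{E}_\mu[-\varphi^c]-\mathbb{E}_\nu[-\varphi^c]$. Second, you should record $\psi\le\varphi^c\le-\varphi$, which shows $\varphi^c$ is finite, before calling it $1$-Lipschitz.

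If you ever replace the citation by your discretization sketch, note that weak compactness of couplings and lower semicontinuity only give the primal inequality $W_1(\mu,\nu)\le\liminf_n W_1(\mu_n,\nu_n)$. To pass the dual side to the limit you also need $W_1(\mu_n,\mu)\to0$ and $W_1(\nu_n,\nu)\to0$. Your easy direction then yields $|\mathbb{E}_{\mu_n}f-\mathbb{E}_{\mu}f|\le W_1(\mu_n,\mu)$ uniformly over $1$-Lipschitz $f$. If you run the argument at the level of $1$-Lipschitz potentials (finite LP duality plus your $c$-transform, then this limit), it avoids the general Kantorovich duality altogether.
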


We will later invoke the auxiliary result above in the proofs of Theorem~\ref{thm:rf_generalization}.  To make this step explicit, we first show how the Kantorovich-Rubinstein duality implies a 
Lipschitz-based bound on differences of expectations. Let $ \mu \triangleq P_\pi(\cdot\mid z), \text{ and } \nu \triangleq P_\pi(\cdot\mid z'),$ be two probability measures on the metric space $(\Omega,d_\pi)$, and define a scalar-valued function $s(\tilde z)$ on the latent space $z$. Assuming that $s(z)$ is a $L$-Lipschitz with respect to $d_\pi$, i.e.,
$$
|s(u)-s(v)| \leq L d_\pi(u,v), \; \forall u,v\in\Omega.
$$
By Kantorovich-Rubinstein duality (Lemma \ref{lemma:KR_duality}), we have 
\begin{align*}
W_1(\mu,\nu) = \sup_{\|f\|_{L}\le 1} \left( \mathbb{E}_{x\sim \mu}[f(x)] - \mathbb{E}_{y\sim \nu}[f(y)]\right),
\end{align*}
where $\|f\|_{L}\le 1$ denotes that $f$ is $1$-Lipschitz with respect to $d_\pi$. Now define the rescaled function $l(\cdot) \triangleq \frac{1}{L}s(\cdot).$ Then $l$ is $1$-Lipschitz since, for all $u,v\in\Omega$, we have
$$
|l(u)-l(v)| = \frac{1}{L}|s(u)-s(v)| \leq
\frac{1}{L}L_{}d_\pi(u,v) = d_\pi(u,v).
$$

Thus $l$ is feasible for the supremum in \eqref{eq:KR_duality}, implying that
$$
W_1(\mu,\nu) \geq \mathbb{E}_{x\sim \mu}[l(x)] - \mathbb{E}_{y\sim \nu}[l(y)] = \frac{1}{L} \left( \mathbb{E}_{x\sim \mu}[s(x)] - \mathbb{E}_{y\sim Q}[s(y)] \right).
$$

Rearranging the terms, we obtain
\begin{align}\label{eq:KR_lispzhitz}
\mathbb{E}_{x\sim \mu}[s(x)] - \mathbb{E}_{y\sim \nu}[s(y)] \leq
LW_1(\mu,\nu).    
\end{align}

\noindent \textbf{Proof of Theorem \ref{thm:rf_generalization}:}
We begin the proof by first noticing that the goal-reaching cost used for planning is Lipschitz on the bounded set of bisimulation latent states, i.e., $\left\{w:\|w\|_2\leq {H}_w\right\}$. Indeed, for any $w,w'$ with $\|w\|_2,\|w'\|_2,\|w_g\|_2\leq {H}_w$, we have the following expression
\begin{align}
\left|c(w)-c(w')\right| &= \left|\|w-w_g\|_2^2-\|w'-w_g\|_2^2\right| \notag \\
&= \left|\langle w-w',\,w+w'-2w_g\rangle\right| \notag \\
&\leq \|w-w'\|_2 \|w+w'-2w_g\|_2\notag \;\; \text{  (Cauchy–Schwarz inequality)}\\
&\leq 4{H}_w \|w-w'\|_2,
\end{align}
so $c(w)$ is $4{H}_w$-Lipschitz.

We further write the recursion
\begin{align}\label{eq:rf_bellman}
J^\pi_T(z)=c\left(h_\eta(z)\right)+\gamma \mathbb{E}_{\tilde{z}\sim P_\pi(\cdot \mid z)}\left[J^\pi_{T-1}(\tilde{z})\right],
\end{align}
where we assume that $J^\pi_0(z)\equiv 0$.

We then prove by induction that $J^\pi_T$ is $L_T$-Lipschitz with respect to $d_\pi(z,z')$, with $L_T \triangleq 8{H}_wT$, which implies  \eqref{eq:rf_value_bound}.

For $T=0$ it holds trivially. Assume it holds for $T-1$. Then for any $z,z'$,
\begin{align*}
&\left|J^\pi_T(z)-J^\pi_T(z')\right| \leq \left|c\left(h_\eta(z)\right)-c\left(h_\eta(z')\right)\right| + \gamma\left|\mathbb{E}_{\tilde{z}\sim P_\pi(\cdot\mid z)}J^\pi_{T-1}(\tilde{z}) - \mathbb{E}_{\tilde{z}'\sim P_\pi(\cdot\mid z')}J^\pi_{T-1}(\tilde{z}')\right|.
\end{align*}

Note that the first term is bounded by the quantity $4{H}_w\|h_\eta(z)-h_\eta(z')\|_2$. On the other hand, for the second term, since $J^\pi_{T-1}$ is $L_{T-1}$-Lipschitz with respect to $d_\pi(z,z')$, we can use Kantorovich-Rubinstein duality to bound $\left| \mathbb{E}_{\tilde{z}\sim P_\pi(\cdot\mid z)}J^\pi_{T-1}(\tilde{z}) - \mathbb{E}_{\tilde{z}'\sim P_\pi(\cdot\mid z')}J^\pi_{T-1}(\tilde{z}') \right|$ by $L_{T-1} W_1 \left(P_\pi(\cdot \mid z),P_\pi(\cdot\mid z')\right),$ i.e., by using \eqref{eq:KR_lispzhitz}.

Using the definition of the bisimulation metric, with the reward term dropped, we have
$W_1\left(P_\pi(\cdot \mid z),P_\pi(\cdot\mid z')\right)=\gamma^{-1}d_\pi(z,z')$.
Therefore, we obtain
\begin{align*}
\left|J^\pi_T(z)-J^\pi_T(z')\right| &\leq 4{H}_w\|h_\eta(z)-h_\eta(z')\|_2 + L_{T-1} d_\pi(z,z').
\end{align*}

As we assume that $\|h_\eta(z)-h_\eta(z')\|_2\leq d_\pi(z,z')+\varepsilon$,  then
\begin{align}
\left|J^\pi_T(z)-J^\pi_T(z')\right| \leq \left(4{H}_w+L_{T-1}\right)d_\pi(z,z') + 4{H}_w\varepsilon.
\end{align}

Unrolling the recursion and assuming that $\epsilon$ is sufficiently small, i.e., $\epsilon \leq d_\pi(z,z')$, then $L_T \triangleq  8{H}_wT$, which implies the bound as stated in Theorem \ref{thm:rf_generalization}.

\end{document}